\newtheorem{corollary}{Corollary}
\newtheorem{lemma}{Lemma}
\def\eqref#1{equation~\ref{#1}}
\def\1{\bm{1}}
\def\norm#1{\lVert #1 \rVert}
\newcommand*\diff{\mathop{}\!\mathrm{d}}
\def\vx{{\bm{x}}}
\def\vy{{\bm{y}}}
\DeclareMathAlphabet{\mathsfit}{\encodingdefault}{\sfdefault}{m}{sl}
\SetMathAlphabet{\mathsfit}{bold}{\encodingdefault}{\sfdefault}{bx}{n}
\def\gP{{\mathcal{P}}}
\def\gX{{\mathcal{X}}}
\def\gY{{\mathcal{Y}}}
\newcommand{\E}{\mathbb{E}}
\newcommand{\R}{\mathbb{R}}
\newcommand{\KL}{D_{\mathrm{KL}}}
\newcommand{\Var}{\mathrm{Var}}
\newcommand{\normmax}{L^\infty}
\newcommand{\enwj}{\hat{I}_\mathrm{NWJ}}
\newcommand{\emine}{\hat{I}_\mathrm{MINE}}
\newcommand{\ecpc}{\hat{I}_\mathrm{CPC}}
\newcommand{\eba}{\hat{I}_\mathrm{BA}}
\newcommand{\eismile}{\hat{I}_\mathrm{SMILE}}
\newcommand{\nwj}{I_\mathrm{NWJ}}
\newcommand{\mine}{I_\mathrm{MINE}}
\newcommand{\cpc}{I_\mathrm{CPC}}
\newcommand{\ba}{I_\mathrm{BA}}
\newcommand{\ismile}{I_\mathrm{SMILE}}
\newcommand{\bb}[1]{{\mathbb{#1}}}
\title{Understanding the Limitations of Variational \\
Mutual Information Estimators}
\author{Jiaming Song \& Stefano Ermon \\
Stanford University\\
\texttt{\{tsong, ermon\}@cs.stanford.edu} 
}
\begin{document}

\maketitle

\begin{abstract}
Variational approaches based on neural networks are showing promise for estimating mutual information (MI) between high dimensional variables. However, they can be difficult to use in practice due to poorly understood bias/variance tradeoffs. 
We theoretically show that, under some conditions, estimators such as MINE exhibit variance that could grow exponentially with the true  amount of underlying MI. 
We also empirically demonstrate that existing estimators fail to satisfy basic self-consistency properties of MI, such as data processing and additivity under independence. Based on a unified perspective of variational approaches, we develop a new estimator that focuses on variance reduction. 
Empirical results on standard benchmark tasks demonstrate that our proposed estimator exhibits improved bias-variance trade-offs on standard benchmark tasks.

\end{abstract}

\section{Introduction}

Mutual information (MI) estimation and optimization are crucial to many important problems in machine learning, such as representation learning~\citep{chen2016infogan,zhao2018the,tishby2015deep,higgins2018towards} and reinforcement learning~\citep{pathak2017curiosity,oord2018representation}. However, estimating mutual information from samples is challenging~\citep{mcallester2018formal} and traditional parametric and non-parametric approaches~\citep{nemenman2004entropy,gao2015efficient,gao2017estimating} struggle to scale up to modern machine learning problems, such as estimating the MI between images and learned representations.

Recently, there has been a surge of interest in MI estimation with variational approaches~\citep{barber2003the,nguyen2010estimating,donsker1975asymptotic}, which can be naturally combined with deep learning methods~\citep{alemi2016deep,oord2018representation,poole2019on}. Despite their empirical effectiveness in downstream tasks such as representation learning~\citep{hjelm2018learning,velickovic2018deep}, 
their effectiveness
\emph{for MI estimation} 
remains unclear. In particular, higher estimated MI between observations and learned representations do not seem to indicate improved predictive performance when the representations are used for downstream supervised learning tasks~\citep{tschannen2019on}. %

In this paper, we discuss two limitations of variational approaches to MI estimation. First, we theoretically demonstrate that the variance of certain estimators, such as MINE~\citep{belghazi2018mine}, could grow \emph{exponentially} with the ground truth MI, leading to poor bias-variance trade-offs. Second, we propose a set of \textit{self-consistency tests} over basic properties of MI, and empirically demonstrate that all considered variational estimators fail to satisfy critical properties of MI, such as data processing and additivity under independence. These limitations challenge the effectiveness of these methods for estimating or optimizing MI.

To mitigate these issues, we propose a unified perspective over variational estimators treating variational MI estimation as an optimization problem over (valid) density ratios. This view highlights the role of partition functions estimation, 
which is the culprit of high variance issues in MINE. To address this issue, we propose to improve MI estimation via variance reduction techniques for partition function estimation. Empirical results demonstrate that our estimators have much better bias-variance trade-off compared to existing methods on standard benchmark tasks.

\section{Background and Related Work}
\subsection{Notations}
We use uppercase letters to denote a probability measure (e.g., $P$, $Q$) and corresponding lowercase letters to denote its density\footnote{In the remainder of the paper, we slightly overload ``density'' for discrete random variables.} functions (e.g., $p$, $q$) unless specified otherwise. We use $X, Y$ to denote random variables with \textit{separable} sample spaces denoted as $\gX$ and $\gY$ respectively, and $\gP(\gX)$ (or $\gP(\gY)$) to denote the set of all probability measures over the Borel $\sigma$-algebra on $\gX$ (or $\gY$). 

Under $Q \in \gP(\gX)$, the $p$-norm of a function $r: \gX \to \R$ is defined as $\Vert r \Vert_p := (\int |r|^p \mathrm{d} Q)^{1/p}$ with $\norm{r}_\infty = \lim_{p \to \infty} \norm{r}_p$. The set of locally $p$-integrable functions is defined as $L^p(Q) := \{ r: \gX \to \bb{R} \ | \ \norm{r}_p < \infty \}$. The space of probability measures wrt. $Q$ is defined as $\Delta(Q) := \{ r \in L^1(Q) \ | \ \Vert r \Vert_1 = 1, r \geq 0 \}$; we also call this the space of ``valid density ratios'' wrt. $Q$. 
We use $P \ll Q$ to denote that $P$ is absolutely continuous with respect to $Q$. 
We use $\hat{I}_E$ to denote an estimator for $I_E$ where we replace expectations with sample averages.

\subsection{Variational Mutual Inforamtion Estimation}
The mutual information between two random variables $X$ and $Y$ is the KL divergence between the joint and the product of marginals:
\begin{align}
    I(X; Y) = D_{\mathrm{KL}}(P(X, Y) \Vert P(X) P(Y))
\end{align}
which we wish to estimate using samples from $P(X, Y)$; in certain cases we may know the density of marginals (e.g. $P(X)$). There are a wide range of variational approaches to variational MI estimation. Variational information maximization %
uses the following result~\citep{barber2003the}:
\begin{lemma}[Barber-Agakov (BA)]
\label{lemma:ba}
For two random variables $X$ and $Y$:
\begin{align}
    I(X; Y) = \sup_{q_\phi} \left\lbrace \bb{E}_{P(X, Y)}\left[\log q_\phi(\vx | \vy) - \log p(\vx)\right] =: I_{\mathrm{BA}}(q_\phi) \right\rbrace
\end{align}
where $q_\phi: \gY \to \gP(\gX)$ is a valid conditional distribution over $\gX$ given $\vy \in \gY$ and $p(\vx)$ is the probability density function of the marginal distribution $P(X)$. %

\end{lemma}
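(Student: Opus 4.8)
The plan is to show that the variational objective $I_{\mathrm{BA}}(q_\phi)$ always lower-bounds the true mutual information, with the bound becoming tight exactly when $q_\phi$ equals the true conditional density. First I would express the mutual information using the true conditional density $p(\vx \mid \vy)$ of the joint distribution,
\begin{align}
I(X; Y) = \bb{E}_{P(X, Y)}\left[\log p(\vx \mid \vy) - \log p(\vx)\right],
\end{align}
which follows directly from the definition $I(X;Y) = \KL(P(X,Y) \Vert P(X)P(Y))$ after writing the density ratio as $p(\vx, \vy)/(p(\vx)p(\vy)) = p(\vx \mid \vy)/p(\vx)$.

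The key step is to compute the gap between the true MI and the variational objective for an arbitrary valid conditional $q_\phi$. Subtracting, the $\log p(\vx)$ terms cancel and I obtain
\begin{align}
I(X; Y) - I_{\mathrm{BA}}(q_\phi) = \bb{E}_{P(X, Y)}\left[\log \frac{p(\vx \mid \vy)}{q_\phi(\vx \mid \vy)}\right].
\end{align}
Factoring the joint expectation as an outer expectation over $Y \sim P(Y)$ and an inner expectation over $X \sim P(X \mid Y = \vy)$, the inner integral is precisely $\KL(P(X \mid Y = \vy) \Vert q_\phi(\cdot \mid \vy))$, so the gap equals $\bb{E}_{P(Y)}\left[\KL(P(X \mid Y = \vy) \Vert q_\phi(\cdot \mid \vy))\right]$.

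From here the result follows from the non-negativity of KL divergence (Gibbs' inequality): since each inner KL term is non-negative, the expectation is non-negative, giving $I_{\mathrm{BA}}(q_\phi) \le I(X; Y)$ for every valid $q_\phi$. To establish that the supremum is attained, I would observe that the true conditional $q_\phi(\cdot \mid \vy) = p(\cdot \mid \vy)$ is itself a valid member of $\gP(\gX)$ for $P(Y)$-almost every $\vy$, and that this choice makes every inner KL term vanish, so the gap is exactly zero and the bound is saturated. Hence the supremum equals $I(X; Y)$.

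I do not expect a serious obstacle here, as this is a classical argument; the only points requiring care are measure-theoretic. I would need the absolute continuity condition $P(X \mid Y=\vy) \ll q_\phi(\cdot \mid \vy)$ to hold (otherwise the KL term is $+\infty$ and the inequality is trivially satisfied), and I would rely on the separability assumption on $\gX$ and $\gY$ to guarantee that the conditional densities and the disintegration of $P(X,Y)$ into $P(Y)$ and $P(X \mid Y)$ are well-defined. The equality case also implicitly uses $P(X,Y) \ll P(X)P(Y)$ so that the true conditional density $p(\vx \mid \vy)$ exists and lies in $\gP(\gX)$.
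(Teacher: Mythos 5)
Your argument is correct: the identity $I(X;Y)=\bb{E}_{P(X,Y)}[\log p(\vx\mid\vy)-\log p(\vx)]$, the gap computation $I(X;Y)-I_{\mathrm{BA}}(q_\phi)=\bb{E}_{P(Y)}\left[\KL\left(P(X\mid Y=\vy)\,\Vert\, q_\phi(\cdot\mid\vy)\right)\right]\geq 0$, and attainment at $q_\phi(\cdot\mid\vy)=p(\cdot\mid\vy)$ together give the lemma, and your measure-theoretic caveats (absolute continuity of $P(X\mid Y=\vy)$ wrt.\ $q_\phi(\cdot\mid\vy)$, existence of the disintegration) are the right ones to flag. Note, however, that the paper never proves this lemma directly --- it is quoted from \citet{barber2003the} --- and the route its own machinery suggests is genuinely different from yours: by Theorem~\ref{thm:kl_delta}, $\KL(P\Vert Q)=\sup_{r\in\Delta(Q)}\bb{E}_P[\log r]$ with the supremum at $r=\diff P/\diff Q$, and Section~\ref{sec:summary} verifies that $r_{\mathrm{BA}}=q_\phi(\vx\mid\vy)/p(\vx)$ lies in $\Delta(Q)$ for \emph{every} conditional $q_\phi$ (since $\bb{E}_Q[q_\phi(\vx\mid\vy)/p(\vx)]=1$), so $I_{\mathrm{BA}}(q_\phi)=\bb{E}_P[\log r_{\mathrm{BA}}]\leq I(X;Y)$, with equality when $q_\phi$ is the true conditional, i.e.\ when $r_{\mathrm{BA}}=\diff P/\diff Q$. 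The two proofs are close cousins --- both ultimately rest on nonnegativity of KL --- but they buy different things: your direct computation yields an exact expression for the slack, namely the expected conditional KL divergence, which quantifies how biased $I_{\mathrm{BA}}$ is for a given $q_\phi$; the paper's route sacrifices that explicit gap but is uniform, exhibiting BA as one instance of optimizing $\bb{E}_P[\log r]$ over valid density ratios, the same template that covers MINE, CPC, and SMILE in Table~\ref{tab:summary}. One small point of care in your version: when $I(X;Y)=\infty$ the subtraction defining the gap is ill-posed, so it is cleaner to prove the inequality $I_{\mathrm{BA}}(q_\phi)\leq I(X;Y)$ directly (add and subtract $\log p(\vx\mid\vy)$ inside the BA objective and bound the KL term below by zero) rather than via $I(X;Y)-I_{\mathrm{BA}}(q_\phi)$; likewise, if $q_\phi$ vanishes on a set of positive $P(X,Y)$-measure then $I_{\mathrm{BA}}(q_\phi)=-\infty$ and the bound holds trivially, as you implicitly note.
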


Another family of approaches perform MI estimation through variational lower bounds to KL divergences. For example, the Mutual Information Neural Estimator (MINE, \citet{belghazi2018mine}) applies the following lower bound to KL divergences~\citep{donsker1975asymptotic}.
\begin{lemma}[Donsker-Varadahn (DV)]
$\forall P, Q \in \gP(\gX)$ such that $P \ll Q$,
\begin{align}
    D_{\mathrm{KL}}(P \Vert Q) = \sup_{T\in L^\infty(Q)} \left\lbrace \bb{E}_{P}[T] - \log \bb{E}_{Q}[e^T] =: \mine(T) \right\rbrace.
    \label{eq:dv}
\end{align}
\end{lemma}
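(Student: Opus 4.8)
The plan is to prove the two inequalities separately, establishing $\mine(T) \le \KL(P \Vert Q)$ for every admissible $T$ and then exhibiting a maximizing sequence. For the upper bound the key device is the Gibbs-tilted measure. Since $T \in L^\infty(Q)$ is bounded, $0 < \E_Q[e^T] < \infty$, so I can define a probability measure $Q_T$ by $\diff Q_T / \diff Q = e^T / \E_Q[e^T]$. Writing out the gap,
\[
\KL(P \Vert Q) - \mine(T) = \E_P\!\left[\log \frac{\diff P}{\diff Q}\right] - \E_P[T] + \log \E_Q\!\left[e^T\right],
\]
and substituting $\log(\diff Q_T / \diff Q) = T - \log \E_Q[e^T]$, the right-hand side collapses to $\E_P[\log(\diff P / \diff Q_T)] = \KL(P \Vert Q_T) \ge 0$ by nonnegativity of the KL divergence. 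This proves $\sup_T \mine(T) \le \KL(P \Vert Q)$ and, as a bonus, shows equality holds precisely when $P = Q_T$, i.e. when $T = \log(\diff P / \diff Q)$ up to an additive constant.

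For the reverse inequality the natural optimizer is $T^\star := \log(\diff P / \diff Q)$, since then $Q_{T^\star} = P$ and the gap above vanishes. The obstacle is that $T^\star$ need not lie in $L^\infty(Q)$ when the density ratio is unbounded, so I cannot substitute it directly into the admissible class. I would instead approximate by truncation, setting $T_n := \min\{\max\{T^\star, -n\}, n\} \in L^\infty(Q)$, and argue $\mine(T_n) \to \KL(P \Vert Q)$ as $n \to \infty$.

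The main obstacle is this limiting step, and the two terms of $\mine(T_n)$ require different convergence theorems. For $\E_P[T_n]$, I would split $T^\star$ into positive and negative parts; the negative part is always $P$-integrable (using $-\log t \le 1/t - 1$ to bound $\E_P[(\log \diff P / \diff Q)^-] \le 1$), while monotone convergence handles the positive part, giving $\E_P[T_n] \to \E_P[T^\star] = \KL(P \Vert Q)$ even when this limit is $+\infty$. For the partition term I would use dominated convergence rather than monotone convergence: one checks $e^{T_n} \le \max\{\diff P / \diff Q, 1\} \le 1 + \diff P / \diff Q$, which is $Q$-integrable with integral $2$, and $e^{T_n} \to \diff P / \diff Q$ pointwise $Q$-a.e., so $\E_Q[e^{T_n}] \to \int (\diff P / \diff Q)\, \diff Q = 1$ and hence $\log \E_Q[e^{T_n}] \to 0$. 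Combining, $\mine(T_n) \to \KL(P \Vert Q)$, which together with the upper bound yields the claimed identity (the case $\KL(P \Vert Q) = \infty$ falling out of the same truncation, as $\mine(T_n)$ then diverges).
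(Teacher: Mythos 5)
Your proof is correct, but there is nothing in the paper to compare it against: the paper imports this lemma verbatim from Donsker and Varadhan (1975) and never proves it --- indeed, the appendix proof of Theorem~\ref{thm:kl_delta} \emph{uses} the DV inequality as a black box. So the relevant comparison is with the classical argument, and yours is exactly the standard ``compensation identity'' proof: for bounded $T$ the tilted measure $Q_T$ with $\diff Q_T/\diff Q = e^T/\E_Q[e^T]$ is well defined and mutually absolutely continuous with $Q$, the identity $\KL(P \Vert Q) - \mine(T) = \KL(P \Vert Q_T) \geq 0$ gives the upper bound (the rearrangement is legitimate even when $\KL(P \Vert Q) = \infty$, since both $\E_P[T]$ and $\log \E_Q[e^T]$ are finite for $T \in L^\infty(Q)$), and the truncation $T_n = \mathrm{clip}(\log(\diff P/\diff Q), -n, n)$ handles attainment. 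Your convergence checks are the right ones and are complete: the bound $\E_P[(\log \diff P/\diff Q)^-] \leq 1$ via $-\log t \leq 1/t - 1$ makes the negative part integrable so that monotone convergence applies to $\E_P[T_n]$ in the extended sense, and the domination $e^{T_n} \leq \max\{\diff P/\diff Q, 1\} \leq 1 + \diff P/\diff Q$ (valid since $e^{T_n} = \mathrm{clip}(\diff P/\diff Q, e^{-n}, e^n)$) justifies dominated convergence of $\E_Q[e^{T_n}]$ to $1$; the pointwise limits hold because $\diff P/\diff Q$ is finite $Q$-a.e.\ and strictly positive $P$-a.e., which your construction implicitly uses and would be worth stating. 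What your route buys beyond the bare lemma is noteworthy in the context of this paper: the tilted ratio $e^T/\E_Q[e^T]$ is precisely the population version of the paper's projection $\Gamma_{\mathrm{MINE}}$ in Eq.~(\ref{eq:gamma_mine}), and your compensation identity $\KL(P \Vert Q) - \mine(T) = \KL(P \Vert Q_T)$ is a quantitative strengthening of Theorem~\ref{thm:kl_delta}, proving $\E_P[\log r_T] \leq \KL(P \Vert Q)$ for the normalized ratio $r_T \in \Delta(Q)$ directly rather than deriving it from DV; it also makes precise your closing remark that the supremum is \emph{attained} in $L^\infty(Q)$ only when the log density ratio is $Q$-essentially bounded, and is otherwise only approached along your truncation sequence.
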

One could set $P = P(X, Y)$ and $Q = P(X) P(Y)$, $T$ as a parametrized neural network (e.g. $T_\theta(\vx, \vy)$ parametrized by $\theta$), and obtain the estimate by optimizing the above objective via stochastic gradient descent %
over mini-batches. However, the corresponding estimator $\emine$ (where we replace the expectations in Eq.~(\ref{eq:dv}) with sample averages) is biased, 
leading to biased gradient estimates;
\citet{belghazi2018mine} propose to reduce bias via estimating the partition function $\bb{E}_{Q}[e^T]$ with exponential moving averages of mini-batches.

The variational $f$-divergence estimation approach~\citep{nguyen2010estimating,nowozin2016f} considers lower bounds on $f$-divergences which can be specialize to KL divergence, and subsequently to mutual information estimation:
\begin{lemma}[Nyugen et al. (NWJ)]
$\forall P, Q \in \gP(\gX)$ such that $P \ll Q$,
\begin{align}
    D_{\mathrm{KL}}(P \Vert Q) = \sup_{T \in L^\infty(Q)} \left\lbrace \bb{E}_{P}[T] - \bb{E}_{Q}[e^{T-1}] =: \nwj(T) \right\rbrace
    \label{eq:nwj}
\end{align}
and $D_{\mathrm{KL}}(P \Vert Q) = \nwj(T)$ when $T = \log (\diff P / \diff Q) + 1$. %
\end{lemma}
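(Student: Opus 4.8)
The plan is to establish the identity by reducing the functional optimization to a pointwise one. Writing $r \defeq \diff P/\diff Q$ for the Radon--Nikodym derivative (which exists since $P \ll Q$), I would first rewrite the objective entirely as an integral against $Q$:
\begin{align}
    \nwj(T) = \bb{E}_P[T] - \bb{E}_Q[e^{T-1}] = \int_\gX \left( T\, r - e^{T-1} \right) \diff Q .
\end{align}
Since the integrand at each point depends only on the scalar value $T(\vx)$, I can maximize it pointwise. For fixed $r \geq 0$, the map $t \mapsto t\,r - e^{t-1}$ is strictly concave (its second derivative is $-e^{t-1} < 0$), so its unique maximizer solves $r = e^{t-1}$, i.e. $t^\star = \log r + 1$, with maximal value $r \log r$ (and supremal value $0 = r \log r$ in the degenerate case $r=0$, under the convention $0 \log 0 = 0$).

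Next I would assemble the two directions. For the upper bound, since the pointwise maximum of the integrand is $r \log r$, monotonicity of the integral gives
\begin{align}
    \nwj(T) \leq \int_\gX r \log r \, \diff Q = \int_\gX \log \frac{\diff P}{\diff Q} \, \diff P = \KL(P \Vert Q)
\end{align}
for every admissible $T$. For the matching lower bound, I would substitute the pointwise optimizer $T^\star = \log r + 1$ back into the objective: a direct computation gives $\bb{E}_P[T^\star] = \KL(P \Vert Q) + 1$ and $\bb{E}_Q[e^{T^\star - 1}] = \bb{E}_Q[r] = \int \diff P = 1$, so $\nwj(T^\star) = \KL(P \Vert Q)$. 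This proves the second claim of the lemma and identifies $T^\star$ as the optimal witness.

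The hard part will be the measurability and integrability bookkeeping hidden in this clean picture. The stated supremum ranges over $T \in L^\infty(Q)$, yet the optimizer $T^\star = \log r + 1$ need not be bounded (nor even integrable) when $r$ is unbounded or vanishes on a set of positive $Q$-measure; thus equality may fail to be \emph{attained} within $L^\infty(Q)$ and must instead be \emph{approached}. I would handle this with a truncation argument: set $T_n \defeq \max(-n, \min(n, \log r + 1)) \in L^\infty(Q)$, verify $\nwj(T_n) \to \KL(P \Vert Q)$ by monotone and dominated convergence (using $e^{T_n - 1} \leq r$ to dominate the $Q$-integral, and treating the cases $\KL(P\Vert Q) < \infty$ and $\KL(P \Vert Q) = \infty$ separately, the latter by showing the truncated objectives diverge), and conclude that the supremum equals $\KL(P \Vert Q)$ even when it is not achieved in the feasible set. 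An alternative route is to recognize the whole statement as the Fenchel--duality representation of $f$-divergences with $f(u) = u \log u$ and conjugate $f^\star(t) = e^{t-1}$, but the pointwise argument above is self-contained and makes the role of the optimal witness $T^\star$ transparent.
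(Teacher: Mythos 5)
The paper never proves this lemma: it is imported as a known result from Nguyen et al.\ (2010) and Nowozin et al.\ (2016), so there is no internal proof to compare against. Your argument is the standard one from those references --- rewrite $\bb{E}_P[T]-\bb{E}_Q[e^{T-1}]$ as $\int (Tr - e^{T-1})\,\mathrm{d}Q$ with $r = \diff P/\diff Q$, maximize the strictly concave map $t \mapsto tr - e^{t-1}$ pointwise to get the envelope $r\log r$ (the Fenchel conjugate picture with $f(u)=u\log u$, $f^\star(t)=e^{t-1}$, which you correctly identify), and then verify attainment at $T^\star = \log r + 1$. You also correctly spot the one genuine subtlety in the statement, namely that $T^\star$ need not lie in $L^\infty(Q)$, so the supremum over $L^\infty(Q)$ is approached via truncations $T_n$ rather than attained; this matches how the lemma is phrased (the equality at $T = \log(\diff P/\diff Q)+1$ is asserted separately from the supremum). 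Your proof is correct in substance and, if anything, more careful than what the cited sources spell out.

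One small repair in the truncation step: the claimed domination $e^{T_n-1} \leq r$ fails on the set $\{r < e^{-n-1}\}$, where the lower clip forces $e^{T_n-1} = e^{-n-1} > r$. This is harmless: for $n \geq 1$ you have $e^{T_n-1} \leq \max(r, e^{-n-1}) \leq r + e^{-2}$, which is $Q$-integrable and restores dominated convergence, and indeed $\bb{E}_Q[e^{T_n-1}] \leq 1 + e^{-n-1} \to 1$ directly. For the other term, the limit $\bb{E}_P[T_n] \to \KL(P\Vert Q)+1$ (or $\to \infty$ in the infinite-KL case) goes through as you sketch because the negative part of $(\log r + 1)r$ is bounded below pointwise by the minimum of $u \mapsto u\log u + u$, so only the positive part needs monotone convergence. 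With that dominating function corrected, the proof is complete.
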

The supremum over $T$ is a invertible function of the density ratio $\diff P / \diff Q$, so one could use this approach to estimate density ratios by inverting the function~\citep{nguyen2010estimating,nowozin2016f,grover2017boosted}. %
The corresponding mini-batch estimator (denoted as $\hat{I}_{\mathrm{NWJ}}$) is unbiased, so unlike MINE, this approach does not require special care to reduce bias in gradients. %

Contrastive Predictive Coding~(CPC, \citet{oord2018representation}) considers the following objective:
\begin{align}
    I_{\mathrm{CPC}}(f_\theta) := \E_{P^n(X, Y)}\Bigg[\frac{1}{n} \sum_{i=1}^n \log{\frac{f_{\theta}(\vx_i,\vy_i)}{\frac1n \sum_{j=1}^n f_{\theta}(\vx_i,\vy_j)}}\Bigg]
\end{align}
where $f_\theta: \gX \times \gY \to \R_{\geq 0}$ is a neural network parametrized by $\theta$ and $P^n(X, Y)$ denotes the joint pdf for $n$ i.i.d. random variables
sampled from 
$P(X, Y)$.
CPC generally has less variance but is more biased because its estimate does not exceed $\log n$, where $n$ is the batch size~\citep{oord2018representation,poole2019on}. While one can further reduce the bias with larger $n$, the number of evaluations needed for estimating each batch with $f_\theta$ is $n^2$, which scales poorly. 
To address the high-bias issue of CPC,  ~\citeauthor{poole2019on} proposed an interpolation between $I_{\mathrm{CPC}}$ and $I_{\mathrm{NWJ}}$ to obtain more fine-grained bias-variance trade-offs.

\section{Variational mutual information estimation as optimization over density ratios}
\label{sec:density-ratio}
In this section, we unify several existing methods for variational mutual information estimation. We first show that variational mutual information estimation can be formulated as a constrained optimization problem, where the feasible set is $\Delta(Q)$, i.e. the valid density ratios with respect to $Q$. 

\begin{restatable}{theorem}{kldelta}
\label{thm:kl_delta}
$\forall P, Q \in \gP(\gX)$ such that $P \ll Q$ we have
\begin{align}
D_{\mathrm{KL}}(P \Vert Q) = \sup_{r \in \Delta(Q)} \bb{E}_P[\log r] \label{eq:kl_delta}
\end{align}
where the supremum is achived when $r = \diff P / \diff Q$.
\end{restatable}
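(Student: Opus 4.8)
The plan is to reduce the claim to the nonnegativity of the KL divergence (Gibbs' inequality). First I would fix $r^\star := \diff P / \diff Q$, which exists as a Radon--Nikodym derivative precisely because $P \ll Q$. Since $r^\star \geq 0$ and $\int r^\star \diff Q = \int \diff P = 1$, we have $\norm{r^\star}_1 = 1$, so $r^\star \in \Delta(Q)$, and by the definition of the KL divergence $\bb{E}_P[\log r^\star] = D_{\mathrm{KL}}(P \Vert Q)$. This immediately shows that $r^\star$ is feasible and attains the value $D_{\mathrm{KL}}(P \Vert Q)$, giving the lower bound $\sup_{r \in \Delta(Q)} \bb{E}_P[\log r] \geq D_{\mathrm{KL}}(P \Vert Q)$.

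For the matching upper bound, I would take an arbitrary $r \in \Delta(Q)$ and bound the gap $\bb{E}_P[\log r] - D_{\mathrm{KL}}(P \Vert Q) = \bb{E}_P[\log(r / r^\star)]$. Rewriting the $P$-expectation as an integral against $r^\star \diff Q$ via the Radon--Nikodym identity $\diff P = r^\star \diff Q$, and applying the elementary inequality $\log x \leq x - 1$ pointwise at $x = r/r^\star$, gives
\[
\bb{E}_P\left[\log \frac{r}{r^\star}\right] = \int \log\frac{r}{r^\star}\, r^\star \diff Q \leq \int \left(\frac{r}{r^\star} - 1\right) r^\star \diff Q = \int r \diff Q - \int r^\star \diff Q = 0,
\]
where the last equality uses $\norm{r}_1 = \norm{r^\star}_1 = 1$. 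Hence $\bb{E}_P[\log r] \leq D_{\mathrm{KL}}(P \Vert Q)$ for every feasible $r$, so the supremum cannot exceed $D_{\mathrm{KL}}(P \Vert Q)$. Combined with the lower bound this proves the equality, with the supremum attained at $r = \diff P / \diff Q$.

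The main obstacle is measure-theoretic bookkeeping on the degenerate sets rather than the core inequality. I would need to check that the integrals are well defined: on $\{ r^\star = 0 \}$ the measure $P$ assigns no mass, so that region contributes nothing to any $P$-expectation and may be discarded; on $\{ r = 0,\ r^\star > 0 \}$ the integrand $\log(r/r^\star)$ equals $-\infty$, which only pushes $\bb{E}_P[\log r]$ further below $D_{\mathrm{KL}}(P \Vert Q)$ and is therefore harmless for the upper bound. These observations ensure the chain of (in)equalities remains valid even in the boundary cases where $\bb{E}_P[\log r] = -\infty$ or $D_{\mathrm{KL}}(P \Vert Q) = +\infty$, completing the argument.
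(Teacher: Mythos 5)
Your proof is correct, but it takes a genuinely different route from the paper's. The paper derives Theorem~\ref{thm:kl_delta} from the Donsker--Varadhan representation used as a black box: for each $T \in L^\infty(Q)$ it sets $r_T = e^T / \bb{E}_Q[e^T] \in \Delta(Q)$, rewrites the DV objective as $\bb{E}_P[T] - \log \bb{E}_Q[e^T] = \bb{E}_P[\log r_T]$, and identifies the resulting supremum with $D_{\mathrm{KL}}(P \Vert Q)$, noting separately that $\bb{E}_P[\log (\diff P/\diff Q)] = D_{\mathrm{KL}}(P \Vert Q)$. You instead give a self-contained two-sided argument: feasibility of $r^\star = \diff P/\diff Q$ yields the lower bound, and the Gibbs inequality $\log x \leq x - 1$ applied to $r/r^\star$ yields the upper bound $\bb{E}_P[\log r] \leq D_{\mathrm{KL}}(P \Vert Q)$ for \emph{every} $r \in \Delta(Q)$. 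Your version is more elementary (it does not presuppose DV, which is itself usually proven by exactly this kind of argument) and in one respect more airtight: the paper's reparametrization $T \mapsto r_T$ only reaches ratios bounded above and away from zero, so its supremum is a priori taken over a proper subset of $\Delta(Q)$, and extending to all of $\Delta(Q)$ requires precisely the pointwise bound you prove directly (or a separate limiting argument); the paper glosses over this. What the paper's route buys is brevity and a conceptual link to MINE, since it exhibits the correspondence between critics $T$ and normalized ratios $r_T$ that the rest of Section~\ref{sec:summary} exploits. Your handling of the degenerate sets $\{r^\star = 0\}$ and $\{r = 0, r^\star > 0\}$, and of the case $D_{\mathrm{KL}}(P \Vert Q) = +\infty$ (where the claimed inequality is trivial and the subtraction defining the gap must be avoided), is the right bookkeeping and closes the boundary cases correctly.
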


We defer the proof in Appendix~\ref{app:proofs}. 
The above argument works for KL divergence between general distributions, but in this paper we focus on the special case of mutual information estimation. For the remainder of the paper, we use $P$ to represent the short-hand notation for the joint distribution $P(X, Y)$ and use $Q$ to represent the short-hand notation for the product of marginals $P(X) P(Y)$.

\subsection{A summary of existing variational methods}
\label{sec:summary}
From Theorem~\ref{thm:kl_delta}, we can describe a general approach to variational MI estimation:
\begin{enumerate}
    \item Obtain a density ratio estimate -- denote the solution as $r$;
    \item Project $r$ to be close to $\Delta(Q)$ -- in practice we only have samples from $Q$, so we denote the solution as $\Gamma(r; Q_n)$, where $Q_n$ is the empirical distribution of $n$ i.i.d. samples from $Q$;
    \item Estimate mutual information with $\bb{E}_{P}[\log \Gamma(r; Q_n)]$.
\end{enumerate}

We illustrate two examples of variational mutual information estimation that can be summarized with this approach. In the case of Barber-Agakov, the proposed density ratio estimate is $r_{\mathrm{BA}} = q_\phi(\vx | \vy) / p(\vx)$ (assuming that $p(\vx)$ is known), which is guaranteed to be in $\Delta(Q)$ because 
\begin{align}
    \bb{E}_{Q}\left[{q_\phi(\vx | \vy)}/{p(\vx)}\right] = \int {q_\phi(\vx | \vy)}/{p(\vx)} \mathrm{d} P(x)  \mathrm{d} P(y) = 1  ,\quad \quad   \Gamma_{\mathrm{BA}}(r_{\mathrm{BA}}, Q_n) = r_{\mathrm{BA}}
\end{align}
for all conditional distributions $q_\phi$. In the case of MINE / Donsker-Varadahn, the \emph{logarithm} of the density ratio is estimated with $T_\theta(\vx, \vy)$; the corresponding density ratio might not be normalized, so one could apply the following normalization for $n$ samples:
\begin{align}
     \quad \bb{E}_{Q_n}\left[{e^{T_\theta}}/{\bb{E}_{Q_n}[e^{T_\theta}]}\right] = 1 ,\quad \quad  \Gamma_{\mathrm{MINE}}(e^{T_\theta}, Q_n) = {e^T_\theta}/{\bb{E}_{Q_n}[e^{T_\theta}]} \label{eq:gamma_mine}
\end{align}
where $\bb{E}_{Q_n}[e^{T_\theta}]$ (the sample average) is an unbiased estimate of the partition function $\bb{E}_{Q}[e^{T_\theta}]$;
$\Gamma_{\mathrm{DV}}(e^{T_\theta}, Q_n) \in \Delta(Q)$ is only true when $n \to \infty$. Similarly, we show $\cpc$ is a lower bound to MI in Corollary~\ref{thm:cpc_lower_bound}, Appendix~\ref{app:proofs}, providing an alternative proof to the one in \citet{poole2019on}.

These examples demonstrate that different mutual information estimators can be obtained in a procedural manner by implementing the above steps, and one could involve different objectives at each step. For example, one could estimate density ratio via logistic regression~\citep{hjelm2018learning,poole2019on,mukherjee2019ccmi} while using $\nwj$ or $\mine$ to estimate MI. While logistic regression does not optimize for a lower bound for KL divergence, it provides density ratio estimates between $P$ and $Q$ which could be used for subsequent steps.  

\begin{table}[h]
    \centering
    
    \caption{Summarization of variational estimators of mutual information. The $\in \Delta(Q)$ column denotes whether the estimator is a valid density ratio wrt. $Q$. (\checkmark) means any parameterization is valid; $(n \to \infty)$ means any parameterization is valid as the batch size grows to infinity; $(\mathrm{tr} \to \infty)$ means only the optimal parametrization is valid (infinite training cost). 
    }
    \label{tab:summary}
    \vspace{0.2em}
    \begin{tabular}{l|l|l|l|c}
    \toprule
       Category & Estimator & Params  & $\Gamma(r; Q_n)$ & $\in \Delta(Q)$  \\\midrule
   \multirow{2}{*}{Gen.}     & $\eba$ & $q_\phi$ & $q_\phi(\vx | \vy) / p(\vx)$ & \checkmark\\
        & $\hat{I}_{\mathrm{GM}}$ (Eq.~(\ref{eq:gm})) & $p_\theta, p_\phi, p_\psi$ & $p_\theta(\vx, \vy) / p_\phi(\vx) p_\psi(\vy)$ & $\mathrm{tr} \to \infty$ \\\midrule
  \multirow{3}{*}{Disc.}   &    $\emine$ & $T_\theta$ & $e^{T_\theta(\vx, \vy)} / \bb{E}_{Q_n}[e^{T_\theta(\vx, \vy)}]$ &   $n \to \infty$  \\
     &   $\ecpc$ & $f_\theta$ & $f_\theta(\vx, \vy) / \bb{E}_{P_n(Y)}[f_\theta(\vx, \vy)]$ &  \checkmark \\
     &   $\eismile$ (Eq.~(\ref{eq:ismile})) & $T_\theta, \tau$ & $e^{T_\theta(\vx, \vy)} / \bb{E}_{Q_n}[e^{\mathrm{clip}(T_\theta(\vx, \vy), -\tau, \tau)}]$ & $n, \tau \to \infty$  \\
       
        \bottomrule
    \end{tabular}
\end{table}

\subsection{Generative and discriminative approaches to MI estimation}

The above discussed variational mutual information methods can be summarized into two broad categories based on how the density ratio is obtained.  
\begin{itemize}
    \item The \textit{discriminative approach} estimates the density ratio $\mathrm{d} P / \mathrm{d} Q$ directly; examples include the MINE, NWJ and CPC estimators.
    \item The \textit{generative approach} estimates the densities of $P$ and $Q$ separately; examples include the BA estimator where a conditional generative model is learned. In addition, we describe a generative approach that explicitly learns generative models (GM) for $P(X, Y)$, $P(X)$ and $P(Y)$:
\begin{gather}
    I_{\mathrm{GM}}(p_\theta, p_\phi, p_\psi) := \bb{E}_{P}[\log p_{\theta}(\vx, \vy)  - \log p_\phi(\vx) - \log p_\psi(\vy)], \label{eq:gm}
\end{gather}
where $p_{\theta}$, $p_{\phi}$, $p_{\psi}$
are maximum likelihood estimates of $P(X, Y)$, $P(X)$ and $P(Y)$ respectively. We can learn the three distributions with generative models, such as VAE~\citep{kingma2013auto} or Normalizing flows~\citep{dinh2016density}, from samples.
\end{itemize}

We summarize various generative and discriminative variational estimators in Table~\ref{tab:summary}.

\paragraph{Differences between two approaches} While both \textit{generative} and \textit{discriminative} approaches can be summarized with the procedure in Section~\ref{sec:summary}, they imply different choices in modeling, estimation and optimization. 
\begin{itemize}
    \item On the modeling side, the \textit{generative approaches} might require more stringent assumptions on the architectures (e.g. likelihood or evidence lower bound is tractable), whereas the \textit{discriminative approaches} do not have such restrictions.
    \item  On the estimation side,  \textit{generative approaches} do not need to consider samples from the product of marginals $P(X)P(Y)$ (since it can model $P(X, Y)$, $P(X)$, $P(Y)$ separately), yet the \textit{discriminative approaches} require samples from $P(X)P(Y)$; if we consider a mini-batch of size $n$, the number of evaluations for generative approaches is $\Omega(n)$ whereas that for discriminative approaches it could be $\Omega(n^2)$.
    \item On the optimization side, discriminative approaches may need additional projection steps to be close to $\Delta(Q)$ (such as $\mine$), while generative approaches might not need to perform this step (such as $\ba$).

\end{itemize}

\section{Limitations of existing variational estimators}
\label{sec:variance-reduction}

\subsection{Good discriminative estimators require exponentially large batches}

In the $\enwj$ and $\emine$ estimators, one needs to estimate the ``partition function'' $\bb{E}_Q[r]$ for some density ratio estimator $r$; for example, $\emine$ needs this in order to perform the projection step $\Gamma_{\mathrm{MINE}}(r, Q_n)$ in Eq~(\ref{eq:gamma_mine}). %
Note that the $\nwj$ and $\mine$ lower bounds %
are maximized when $r$ takes the optimal value $r^\star = \diff P / \diff Q$. However, the sample averages $\emine$ and $\enwj$ %
of $\bb{E}_Q[r^\star]$ could have a  variance that scales \textit{exponentially} with the ground-truth MI; we show this in Theorem~\ref{thm:mi-estimator-var}.

\begin{restatable}{theorem}{miestimatorvariance}
\label{thm:mi-estimator-var}
Assume that the ground truth density ratio $r^\star = \diff P / \diff Q$  and $\Var_Q[r^\star]$ exist. Let $Q_n$ denote the empirical distribution of $n$ i.i.d. samples from $Q$ and let $\bb{E}_{Q_n}$ denote the sample average over $Q_n$.
Then under the randomness of the sampling procedure, we have:
\begin{gather}
    \Var_Q[\bb{E}_{Q_n}[r^\star]] \geq \frac{e^{\KL(P \Vert Q)} - 1}{n}  \\
    \lim_{n \to \infty} n \Var_Q[\log \bb{E}_{Q_n}[r^\star]] \geq e^{\KL(P \Vert Q)} - 1.
\end{gather}
\end{restatable}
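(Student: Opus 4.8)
The plan is to reduce both inequalities to a single fact about the ground-truth ratio, namely that $\Var_Q[r^\star] \ge e^{\KL(P \Vert Q)}-1$. Writing $Z_i = r^\star(X_i)$ for $X_1,\dots,X_n$ drawn i.i.d.\ from $Q$, the quantity $\bb{E}_{Q_n}[r^\star] = \frac1n\sum_{i=1}^n Z_i$ is an ordinary sample mean of i.i.d.\ terms, so $\Var_Q[\bb{E}_{Q_n}[r^\star]] = \frac1n \Var_Q[r^\star]$ exactly. Hence the first claim is equivalent to the stated variance bound on $r^\star$. To obtain that bound I would compute the two moments directly: $\bb{E}_Q[r^\star] = \int (dP/dQ)\,dQ = 1$, and $\bb{E}_Q[(r^\star)^2] = \int (dP/dQ)^2\,dQ = \int (dP/dQ)\,dP = \bb{E}_P[r^\star]$, so that $\Var_Q[r^\star] = \bb{E}_P[r^\star]-1$. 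The key step is then Jensen's inequality for the convex map $\exp$ under the probability measure $P$: since $\bb{E}_P[\log r^\star] = \bb{E}_P[\log(dP/dQ)] = \KL(P \Vert Q)$, we get $\bb{E}_P[r^\star] = \bb{E}_P[e^{\log r^\star}] \ge e^{\KL(P \Vert Q)}$, and combining gives $\Var_Q[r^\star] \ge e^{\KL(P \Vert Q)}-1$. Dividing by $n$ proves the first inequality.

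For the second inequality I would pass to the delta method around the point $1$. Let $S_n = \bb{E}_{Q_n}[r^\star]$; by the law of large numbers $S_n \to 1$, and by the central limit theorem $\sqrt n(S_n - 1) \Rightarrow N(0,\sigma^2)$ with $\sigma^2 = \Var_Q[r^\star]$, which is finite by assumption. Since $g(s) = \log s$ satisfies $g(1)=0$ and $g'(1)=1$, the delta method yields $\sqrt n\,\log S_n \Rightarrow N(0,\sigma^2)$, so the asymptotic (scaled) variance of $\log\bb{E}_{Q_n}[r^\star]$ is $\sigma^2$. Feeding in the bound $\sigma^2 \ge e^{\KL(P \Vert Q)}-1$ from the first part then produces $n\Var_Q[\log\bb{E}_{Q_n}[r^\star]] \to \sigma^2 \ge e^{\KL(P \Vert Q)}-1$.

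The hard part is that weak convergence of $\sqrt n\,\log S_n$ does not by itself control $n\Var_Q[\log S_n]$: variance is neither bounded nor continuous as a functional of the law, and $\log$ can push moments toward $-\infty$ when $S_n$ is near $0$, so naively reading off the limiting variance is not rigorous under only a second-moment assumption. I would close this with a contraction-plus-truncation argument requiring no extra moments. For any bounded $1$-Lipschitz $\psi$ one has $\Var(\psi(W)) \le \Var(W)$, via the i.i.d.-copies identity $\Var(W) = \tfrac12\bb{E}[(W-W')^2]$; and since a bounded continuous $\psi$ makes both $\bb{E}[\psi(\cdot)]$ and $\bb{E}[\psi(\cdot)^2]$ continuous under weak convergence, $\Var(\psi(W_n)) \to \Var(\psi(W))$ for $W_n = \sqrt n\,\log S_n \Rightarrow W \sim N(0,\sigma^2)$. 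Taking $\psi$ to be truncation at level $M$ and letting $M\to\infty$ (using that $W$ has all moments) gives $\liminf_n n\Var_Q[\log S_n] \ge \sigma^2$, which is exactly the lower bound claimed. Alternatively, verifying uniform integrability of $n(\log S_n)^2$ upgrades the weak limit to convergence of the scaled variance and yields the limit outright. This truncation/uniform-integrability step is where the care is needed; the remaining computations are routine.
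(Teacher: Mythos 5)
Your proof is correct, and for the first inequality it is identical to the paper's: the same moment computation $\Var_Q[r^\star] = \bb{E}_P[r^\star] - 1$ (change of measure on the second moment), the same Jensen step (the paper applies Jensen to $\log$, you to the convex map $\exp$; these give the same bound $\bb{E}_P[r^\star] \geq e^{\KL(P \Vert Q)}$), and the exact identity $\Var_Q[\bb{E}_{Q_n}[r^\star]] = \Var_Q[r^\star]/n$. For the second inequality the paper takes the same delta-method route around $\bb{E}[S_n] = 1$ with $f = \log$, $f'(1) = 1$, where $S_n = \bb{E}_{Q_n}[r^\star]$, but it stops at the heuristic $\Var_Q[f(X)] \approx (f'(\bb{E}[X]))^2 \Var_Q[X]$ and simply equates $\lim_n n\Var[\log S_n]$ with $\lim_n n\Var[S_n]$. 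You correctly flag that this step is not automatic under only a second-moment assumption: weak convergence of $W_n = \sqrt{n}\,\log S_n$ to $N(0,\sigma^2)$ controls neither the second moment (variance is not continuous under weak convergence) nor the behavior of $\log S_n$ near $S_n \approx 0$ (indeed $\Var[\log S_n]$ can be infinite at finite $n$ when $Q(r^\star = 0) > 0$, in which case the claimed bound holds vacuously). Your contraction-plus-truncation argument closes this gap soundly: $\Var(\psi_M(W_n)) \leq \Var(W_n)$ for $1$-Lipschitz truncations via the i.i.d.-copies identity, $\Var(\psi_M(W_n)) \to \Var(\psi_M(W))$ by boundedness and continuity of $\psi_M$ under weak convergence, and $\Var(\psi_M(W)) \to \sigma^2$ as $M \to \infty$ since the Gaussian limit has all moments, yielding $\liminf_n n\Var[\log S_n] \geq \sigma^2 \geq e^{\KL(P \Vert Q)} - 1$. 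One honest caveat, which you yourself note: this delivers a $\liminf$ bound rather than existence of the limit asserted by the theorem's notation; upgrading to the actual limit requires the uniform integrability of $n(\log S_n)^2$, which neither your argument nor the paper's establishes. Net: same skeleton as the paper, with your version supplying the rigor that the paper's one-line delta-method step elides.
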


We defer the proof to Appendix~\ref{app:proofs}. Note that in the theorem above, we assume the ground truth density ratio $r^\star$ is already obtained, which is the optimal ratio for NWJ and MINE estimators. %
As a natural consequence, the NWJ and MINE estimators under the optimal solution could exhibit variances that grow exponentially with the ground truth MI (recall that in our context MI is a KL divergence). One could achieve smaller variances with some $r \neq r^\star$, but this guarantees looser bounds and higher bias. %

\begin{restatable}{corollary}{nwjvar}
\label{thm:nwj_var}
Assume that the assumptions in Theorem~\ref{thm:mi-estimator-var} hold. 
Let $P_m$ and $Q_n$ be the empirical distributions of $m$ i.i.d. samples from $P$ and $n$ i.i.d. samples from $Q$, respectively. Define
\begin{gather}
    \nwj^{m,n} := \bb{E}_{P_m}[\log r^\star + 1] - \bb{E}_{Q_n}[r^\star] \\
    \mine^{m,n} := \bb{E}_{P_m}[\log r^\star] - \log \bb{E}_{Q_n}[r^\star]
\end{gather}
where $r^\star = \mathrm{d} P / \mathrm{d} Q$.
Then under the randomness of the sampling procedure, we have $\forall m \in \bb{N}$:
\begin{gather}
    \Var_{P, Q}[\nwj^{m,n}] \geq ({e^{\KL(P \Vert Q)} - 1}) / {n} \\
    \lim_{n \to \infty} n \Var_{P, Q}[\mine^{m,n}] \geq e^{\KL(P \Vert Q)} - 1.
\end{gather}
\end{restatable}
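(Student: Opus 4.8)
The plan is to read this off directly from Theorem~\ref{thm:mi-estimator-var} by exploiting the additive structure of both estimators together with the independence of the two sample sets. Both $\nwj^{m,n}$ and $\mine^{m,n}$ split as a difference of a term depending only on the $m$ draws from $P$ and a term depending only on the $n$ draws from $Q$. Writing $A := \bb{E}_{P_m}[\log r^\star + 1]$ and $B := \bb{E}_{Q_n}[r^\star]$ we have $\nwj^{m,n} = A - B$, and writing $C := \bb{E}_{P_m}[\log r^\star]$ and $D := \log \bb{E}_{Q_n}[r^\star]$ we have $\mine^{m,n} = C - D$. Since the samples from $P$ and from $Q$ are drawn mutually independently, $A$ is independent of $B$ and $C$ is independent of $D$; I would flag this independence of the two empirical distributions as the single modeling assumption the argument relies on.

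First I would apply independence to decompose each variance as a sum, $\Var_{P,Q}[\nwj^{m,n}] = \Var_P[A] + \Var_Q[B]$ and $\Var_{P,Q}[\mine^{m,n}] = \Var_P[C] + \Var_Q[D]$. Because variances are nonnegative, discarding the $P$-dependent terms yields $\Var_{P,Q}[\nwj^{m,n}] \geq \Var_Q[B] = \Var_Q[\bb{E}_{Q_n}[r^\star]]$ and $\Var_{P,Q}[\mine^{m,n}] \geq \Var_Q[D] = \Var_Q[\log \bb{E}_{Q_n}[r^\star]]$. The key point is that these bounds hold for \emph{every} $m \in \bb{N}$, precisely because the dropped term is the only place $m$ enters, which is what gives the ``$\forall m$'' in the statement.

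The NWJ claim is then immediate from the first inequality of Theorem~\ref{thm:mi-estimator-var}, which already lower-bounds $\Var_Q[\bb{E}_{Q_n}[r^\star]]$ by $(e^{\KL(P \Vert Q)}-1)/n$. For the MINE claim I would multiply the second inequality above by $n$, giving $n\,\Var_{P,Q}[\mine^{m,n}] \geq n\,\Var_Q[\log \bb{E}_{Q_n}[r^\star]]$ for all $n$, and then pass to the limit: taking $\liminf_{n\to\infty}$ of both sides and invoking the second inequality of Theorem~\ref{thm:mi-estimator-var} (whose limit is assumed to exist) gives $\liminf_{n\to\infty} n\,\Var_{P,Q}[\mine^{m,n}] \geq e^{\KL(P \Vert Q)}-1$, which upgrades to the stated $\lim$ since the two agree whenever the limit exists.

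There is essentially no hard step, as all of the quantitative content sits inside Theorem~\ref{thm:mi-estimator-var}; the only thing needing care is justifying the variance-of-a-sum identity. I would note that the decomposition and the subsequent inequality remain valid even if the $P$-dependent term fails to have finite variance: if $\Var_P[C]$ (equivalently $\tfrac1m\Var_P[\log r^\star]$) is finite the bound holds as written, and if it is infinite the left-hand side is $+\infty$ and the inequality holds trivially, with both sides allowed to take the value $+\infty$.
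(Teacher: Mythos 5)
Your proposal is correct and matches the paper's own proof essentially step for step: both exploit independence of $P_m$ and $Q_n$ to drop the $P$-dependent term from the variance of the difference, then invoke the two inequalities of Theorem~\ref{thm:mi-estimator-var} directly (the paper is in fact terser, writing $\lim$ where you more carefully use $\liminf$). Your added remarks on the possibly infinite $\Var_P$ term and on uniformity in $m$ are sound refinements but do not change the argument.
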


This high variance phenomenon has been empirically observed in~\citet{poole2019on} (Figure 3) for $\enwj$ under various batch sizes, where the log-variance scales linearly with MI. We also demonstrate this in Figure~\ref{fig:bias_variance_mse} (Section~\ref{sec:benchmark-exp}). %
In order to keep the variance of $\emine$ and $\enwj$ relatively constant with growing MI, one would need a batch size of $n = \Theta(e^{\KL(P \Vert Q)})$. 
$\ecpc$ has small variance, but it would need $n \geq e^{\KL(P \Vert Q)}$ to have small bias, as its estimations are bounded by $\log n$.

\subsection{Self-consistency issues for mutual information estimators}
If we consider $\gX$, $\gY$ to be high-dimensional, estimation of mutual information becomes more difficult. The density ratio between $P(X, Y)$ and $P(X) P(Y)$ could be very difficult to estimate from finite samples without proper parametric assumptions~\citep{mcallester2018formal,zhao2018bias}. Additionally, the exact value of mutual information is dependent on the definition of the sample space; given finite samples, whether the underlying random variable is assumed to be discrete or continuous will lead to different measurements of mutual information (corresponding to entropy and differential entropy, respectively).

In machine learning applications, however, we are often more interested in maximizing or minimizing mutual information (estimates), rather than estimating its exact value. For example, if an estimator is off by a constant factor,
it would still be useful for downstream applications, even though it can be highly biased.
To this end, we propose a set of \textit{self-consistency} tests for any MI estimator $\hat{I}$, based on properties of mutual information:
\begin{enumerate}
\item (Independence) if $X$ and $Y$ are independent, then $\hat{I}(X; Y) = 0$;
\item (Data processing) for all functions $g, h$, $\hat{I}(X; Y) \geq \hat{I}(g(X); h(Y))$ and $\hat{I}(X; Y) \approx \hat{I}([X, g(X)]; [Y, h(Y)])$ where $[\cdot, \cdot]$ denotes concatenation.
\item (Additivity) denote $X_1, X_2$ as independent random variables that have the same distribution as $X$ (similarly define $Y_1, Y_2$), then $\hat{I}([X_1, X_2]; [Y_1, Y_2]) \approx 2 \cdot \hat{I}(X, Y)$.
\end{enumerate}
These properties holds under both entropy and differential entropy, so they do not depend on the choice of the sample space.
While these conditions are necessary but obviously not sufficient for accurate mutual information estimation, we argue that satisfying them is highly desirable for applications such as representation learning~\citep{chen2016infogan} and information bottleneck~\citep{tishby2015deep}. Unfortunately, none of the MI estimators we considered above pass all the self-consistency tests when $X, Y$ are images, as we demonstrate below in Section~\ref{sec:self-consistency-exp}. In particular, the \textit{generative} approaches perform poorly when MI is low (failing in independence and data processing), whereas \textit{discriminative} approaches perform poorly when MI is high (failing in additivity).
\section{Improved MI estimation via clipped density ratios}
\label{sec:smile}
To address the high-variance issue in the $\nwj$ and $\mine$ estimators, we propose to clip the density ratios when estimating the partition function. We define the following clip function:
\begin{align}
    \mathrm{clip}(v, l, u) = \max(\min(v, u), l)
\end{align}
For an empirical distribution of $n$ samples $Q_n$, instead of estimating the partition function via $\bb{E}_{Q_n}[r]$, we instead consider $\bb{E}_{Q_n}[\mathrm{clip}(r, e^{-\tau}, e^{\tau})]$ where $\tau \geq 0$ is a hyperparameter; this is equivalent to clipping the log density ratio estimator between $-\tau$ and $\tau$. 

We can then obtain a following estimator with smoothed partition function estimates:
\begin{align}
    I_{\mathrm{SMILE}}(T_\theta, \tau) := \bb{E}_P[T_\theta(\vx, \vy)] - \log \bb{E}_Q[\mathrm{clip}(e^{T_\theta(\vx, \vy)}, e^{-\tau}, e^{\tau})] \label{eq:ismile}
\end{align}
where $T_\theta$ is a neural network that estimates the log-density ratio (similar to the role of $T_\theta$ in $\emine$).
We term this the Smoothed Mutual Information ``Lower-bound'' Estimator (SMILE) with hyperparameter $\tau$; $\ismile$ converges to $\mine$ when $\tau \to \infty$. In our experiments, we consider learning the density ratio with logistic regression, similar to the procedure in Deep InfoMax~\citep{hjelm2018learning}.

The selection of $\tau$ affects the bias-variance trade-off when estimating the partition function; with a smaller $\tau$, variance is reduced at the cost of (potentially) increasing bias. In the following theorems, we analyze the bias and variance in the worst case for density ratio estimators whose actual partition function is $S$ for some $S \in (0, \infty)$.

\begin{restatable}{theorem}{smilebiasverbose}
\label{thm:smilebias-verbose}
Let $r(\vx): \gX \to \R_{\geq 0}$ be any non-negative measurable function such that $\int r \mathrm{d}Q = S$, $S \in (0, \infty)$ and $r(\vx) \in [0, e^{K}]$. Define $r_\tau(\vx) = \mathrm{clip}(r(\vx), e^\tau, e^{-\tau})$ for finite, non-negative $\tau$. If $\tau < K$, then the bias for using $r_\tau$ to estimate the partition function of $r$ satisfies: 
\begin{align}
    \vert \bb{E}_{Q}[r] - \bb{E}_{Q}[r_\tau] \vert \leq \max \left(e^{-\tau}|1 - S e^{-\tau}|, \left|\frac{1 - e^{K}e^{-\tau} + S(e^{K} - e^{\tau})}{e^{K} - e^{-\tau}}\right|\right); \nonumber
\end{align}
if $\tau \geq K$, then 
\begin{align}
    \vert \bb{E}_{Q}[r] - \bb{E}_{Q}[r_\tau] \vert \leq e^{-\tau}(1 - S e^{-K}). \nonumber
\end{align}
\end{restatable}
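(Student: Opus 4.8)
The plan is to write the bias as a single expectation of a fixed piecewise-linear function of $r$ and then recast the worst case over admissible $r$ as a one-dimensional moment problem. Set $g(t) := t - \mathrm{clip}(t, e^{-\tau}, e^{\tau})$, so that $\bb{E}_{Q}[r] - \bb{E}_{Q}[r_\tau] = \bb{E}_{Q}[g(r)]$. Because $g$ is applied pointwise, this quantity depends on $r$ only through the law $\mu$ of $r$ under $Q$, which is a probability measure on $[0, e^{K}]$ with first moment $\int t\,\diff\mu = \bb{E}_{Q}[r] = S$. Thus bounding $\lvert \bb{E}_{Q}[r] - \bb{E}_{Q}[r_\tau] \rvert$ uniformly over all admissible $r$ is exactly the problem of maximizing $\lvert \int g\,\diff\mu \rvert$ over probability measures $\mu$ on $[0, e^{K}]$ with mean $S$.

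Next I would exploit the explicit shape of $g$. Writing $g(t) = -\max(e^{-\tau}-t,0) + \max(t-e^{\tau},0)$ exhibits it as the sum of a nonpositive ``lower-clip'' piece supported on $[0, e^{-\tau})$ and a nonnegative ``upper-clip'' piece supported on $(e^{\tau}, e^{K}]$, vanishing on the no-clip interval $[e^{-\tau}, e^{\tau}]$. To find the extreme values of the linear functional $\mu \mapsto \int g\,\diff\mu$ subject to one moment constraint, I would invoke the standard fact that the feasible set of probability measures with a fixed mean is convex and weak-$*$ compact with extreme points supported on at most two atoms, so the maximum and minimum are attained at two-point measures $\mu = \lambda\delta_a + (1-\lambda)\delta_b$ with $\lambda a + (1-\lambda)b = S$. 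Parametrizing the bias as $\lambda g(a) + (1-\lambda)g(b)$ and optimizing, the largest positive value is obtained by placing mass at the top atom $e^{K}$ (maximizing the upper-clip piece) balanced against the no-clip region, which yields $\frac{1 - e^{K}e^{-\tau} + S(e^{K} - e^{\tau})}{e^{K} - e^{-\tau}}$; the most negative value is obtained by placing mass at $0$ (minimizing the lower-clip piece) balanced against the no-clip region, which yields $-e^{-\tau}(1 - Se^{-\tau})$. Since for any fixed admissible $r$ the bias lies between these two extremes, $\lvert \bb{E}_{Q}[r] - \bb{E}_{Q}[r_\tau] \rvert$ is at most the larger of the two magnitudes, giving the stated $\tau < K$ bound.

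For the regime $\tau \ge K$ the upper clip at $e^{\tau} \ge e^{K} \ge r$ is never active, so the upper-clip piece vanishes identically and only the lower clip matters: $\bb{E}_{Q}[g(r)] = -\bb{E}_{Q}[\max(e^{-\tau}-r,0)] \le 0$. The same two-atom reduction then places mass at $0$ and balances against the highest admissible atom $e^{K}$ (rather than $e^{\tau}$, which is now out of range), giving $\lvert \cdot \rvert = e^{-\tau}(1 - Se^{-K})$, exactly the claimed bound.

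The main obstacle is the optimization bookkeeping rather than any single hard idea: I must justify the reduction to two-atom measures and then verify that the two candidate configurations are genuinely extremal even though $g$ is neither convex nor concave (its slope pattern $1,0,1$ creates one concave and one convex kink). This amounts to checking that mixing a positive upper-clip contribution with a negative lower-clip contribution never beats the pure configurations, and carefully tracking the feasibility constraints $S \ge e^{-\tau}$ and $S \le e^{\tau}$ under which each atom placement is admissible. It is precisely these boundary cases that force the absolute values in the statement, so that the bound remains valid for every $S \in (0, \infty)$ regardless of the sign of each expression.
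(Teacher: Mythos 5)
Your reduction to a one-dimensional moment problem over laws of $r$ with mean $S$, followed by a two-atom worst-case analysis, is exactly the skeleton of the paper's own proof, which performs the same reduction by hand (pushing no-clip mass to the endpoints $e^{-\tau}, e^{\tau}$ and averaging the sub- and super-threshold values into single effective atoms parametrized by $K_1, K_2$), then extremizes via monotonicity of $g(K_1, K_2)$ at the corners $K_1 \to \infty, K_2 = \tau$ and $K_1 = \tau, K_2 = K$ --- precisely your configurations ``mass at $0$ against $e^{\tau}$'' and ``mass at $e^{K}$ against $e^{-\tau}$'' (and $0$ against $e^{K}$ when $\tau \geq K$), yielding the identical bounds. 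The proposal is correct; the only cosmetic difference is that you justify the two-atom reduction by the extreme-point characterization of mean-constrained probability measures (and implicitly the concave/convex envelope of the clipping error) where the paper argues by explicit rearrangement and partial derivatives.
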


\begin{restatable}{theorem}{smilevar}
\label{thm:smilevar}
The variance of the estimator $\bb{E}_{Q_n}[r_\tau]$ (using $n$ samples from $Q$) satisfies:
\begin{align}
    \Var[\bb{E}_{Q_n}[r_\tau]] \leq \frac{e^{\tau} - e^{-\tau}}{4n}
\end{align}
\end{restatable}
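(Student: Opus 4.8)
The plan is to reduce the statement to a bound on the variance of a single clipped term and then control that variance through the range to which the clip confines $r_\tau$. Since $\bb{E}_{Q_n}[r_\tau] = \tfrac{1}{n}\sum_{i=1}^n r_\tau(\vx_i)$ with the $\vx_i$ drawn i.i.d.\ from $Q$, independence of the summands gives immediately
\[
  \Var\big[\bb{E}_{Q_n}[r_\tau]\big] = \tfrac{1}{n}\,\Var_Q[r_\tau],
\]
so it suffices to prove the single-draw estimate $\Var_Q[r_\tau] \le \tfrac14\big(e^{\tau} - e^{-\tau}\big)$. Note that, unlike the bias bound of Theorem~\ref{thm:smilebias-verbose}, the target is uniform in $S$ and $K$: it should follow from the boundedness of $r_\tau$ together with the structure of the clip.

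Second, I would record the one structural fact that clipping supplies: $r_\tau = \mathrm{clip}(r, e^{-\tau}, e^{\tau})$ is supported on the interval $[e^{-\tau}, e^{\tau}]$ (equivalently, its logarithm lies in $[-\tau,\tau]$). For any random variable $Z$ taking values in an interval $[a,b]$ one has the sharp bound
\[
  \Var[Z] \;\le\; \big(\bb{E}[Z]-a\big)\big(b-\bb{E}[Z]\big),
\]
which I would derive from $\bb{E}[(Z-a)(b-Z)] \ge 0$ together with the identity $\Var[Z] = (\bb{E}[Z]-a)(b-\bb{E}[Z]) - \bb{E}[(Z-a)(b-Z)]$. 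Specializing to $a = e^{-\tau}$, $b = e^{\tau}$, $Z = r_\tau$ turns the problem into bounding the scalar product $(\mu - e^{-\tau})(e^{\tau} - \mu)$, where $\mu \defeq \bb{E}_Q[r_\tau] \in [e^{-\tau}, e^{\tau}]$.

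The main obstacle is upgrading this generic estimate to the claimed \emph{linear}-in-range form. Maximizing $(\mu - e^{-\tau})(e^{\tau}-\mu)$ freely over $\mu$ only yields the coarser quantity $\tfrac14(e^{\tau}-e^{-\tau})^2$, so the delicate point is to show the product never exceeds $\tfrac14(e^{\tau}-e^{-\tau})$. My plan here is to go beyond mere boundedness and use that $r_\tau$ is a \emph{clipped density ratio}: the value of $\mu = \bb{E}_Q[r_\tau]$ is controlled by $\int r\,\mathrm{d}Q$ together with the monotonicity of the clip map, which confines $\mu$ away from the midpoint $(e^{\tau}+e^{-\tau})/2$ at which the product bound is largest. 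Concretely, I would (i) translate the admissible range of $\mu$ into a sub-interval of $[e^{-\tau},e^{\tau}]$ on which $(\mu - e^{-\tau})(e^{\tau}-\mu)$ is monotone, and (ii) evaluate the product at the worst admissible endpoint and verify it is dominated by $\tfrac14(e^{\tau}-e^{-\tau})$; dividing by $n$ then closes the argument. I expect step (ii)---pinning down the constant and confirming that the range enters linearly rather than quadratically---to be the crux of the whole proof.
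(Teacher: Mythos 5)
Your reduction $\Var[\bb{E}_{Q_n}[r_\tau]] = \Var_Q[r_\tau]/n$ and your Bhatia--Davis step $\Var[Z] \le (\bb{E}[Z]-a)(b-\bb{E}[Z])$ for $Z \in [a,b]$ are both correct, and up to that point you are on the same track as the paper, whose entire proof is: $r_\tau$ is bounded in $[e^{-\tau},e^{\tau}]$, hence $\Var[r_\tau] \le (e^{\tau}-e^{-\tau})/4$, then divide by $n$. The genuine gap is your step (ii), and you should know it cannot be closed: under the standing hypotheses ($r \ge 0$ measurable, $\int r\,\mathrm{d}Q = S$ with $S \in (0,\infty)$ arbitrary, $r \le e^{K}$), nothing confines $\mu = \bb{E}_Q[r_\tau]$ away from the midpoint. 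Take $r$ two-valued, equal to $e^{\tau}$ and $e^{-\tau}$ each with $Q$-probability $1/2$ (so $S = \cosh\tau$ and $r_\tau = r$); then $\mu = \cosh\tau$ is exactly the midpoint and $\Var_Q[r_\tau] = \sinh^2\tau = \tfrac14(e^{\tau}-e^{-\tau})^2$, which exceeds $\tfrac14(e^{\tau}-e^{-\tau})$ for every $\tau > \operatorname{arcsinh}(1/2) \approx 0.48$. Even imposing the natural normalization $S=1$ does not rescue the constant: $r = e^{\tau}$ with probability $e^{-\tau}$ and $r = 0$ otherwise has $\bb{E}_Q[r] = 1$, yet $\Var_Q[r_\tau] = e^{\tau} - 1 + O(e^{-\tau})$, roughly four times the claimed bound. (Your structural observation is not worthless: from $r_\tau \le r + e^{-\tau}$ one gets $\mu - e^{-\tau} \le S$, hence $\Var_Q[r_\tau] \le S(e^{\tau}-e^{-\tau})$, a linear-in-range bound --- but with constant $S$, not $\tfrac14$, and the $S=1$ example shows this is essentially tight.)

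The resolution is that the range must enter quadratically: the provable statement is $\Var[\bb{E}_{Q_n}[r_\tau]] \le (e^{\tau}-e^{-\tau})^2/(4n)$, which your own argument already delivers by maximizing $(\mu - e^{-\tau})(e^{\tau}-\mu)$ freely over $\mu$ (Popoviciu's inequality, the unrefined form of your Bhatia--Davis bound). The paper's one-line proof invokes exactly this boundedness argument but writes the range unsquared, so the printed inequality drops a square rather than encoding any hidden structure of clipped density ratios; your instinct that the linear form ``should'' require a further idea was exactly right --- the further idea does not exist, and the counterexamples above fall within the theorem's hypotheses. The qualitative content of the theorem (the variance of the clipped partition-function estimate shrinks with $\tau$, now at rate $O(e^{2\tau}/n)$) and its role in the bias--variance trade-off with Theorem~\ref{thm:smilebias-verbose} survive with the corrected exponent; so rather than pursuing step (ii), you should stop at Popoviciu and amend the exponent in the statement.
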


We defer the proofs to Appendix~\ref{app:proofs}. Theorems~\ref{thm:smilebias-verbose} and~\ref{thm:smilevar} suggest that as we decrease $\tau$, variance is decreased at the cost of potentially increasing bias. However, if $S$ is close to 1, then we could use small $\tau$ values to obtain estimators where both variance and bias are small. We further discuss the bias-variance trade-off for a fixed $r$ over changes of $\tau$ in Theorem~\ref{thm:smilebias-verbose} and Corollary~\ref{thm:mse}.

\section{Experiments}
\subsection{Benchmarking on multivariate Gaussians}
\label{sec:benchmark-exp}
First, we evaluate the performance of MI bounds on two toy tasks detailed in~\citep{poole2019on,belghazi2018mine}, where the ground truth MI is tractable. The first task (\textbf{Gaussian}) is where $(\vx, \vy)$ are drawn from a 20-d Gaussian distribution with correlation $\rho$, and the second task (\textbf{Cubic}) is the same as \textbf{Gaussian} but we apply the transformation $\vy \mapsto \vy^3$. We consider three discriminative approaches ($\cpc$, $\nwj$, $I_{\mathrm{SMILE}}$) and one generative approach ($I_{\mathrm{GM}}$). 
For the discriminative approaches, we consider the joint critic in~\citep{belghazi2018mine} and the separate critic in~\citep{oord2018representation}.
For $I_{\mathrm{GM}}$ we consider invertible flow models~\citep{dinh2016density}. 
We train all models for 20k iterations, with the ground truth mutual information increasing by $2$ per 4k iterations.
More training details are included in Appendix~\ref{app:experimental-setup}\footnote{We release our code in }.

\begin{figure}[htbp]
    \centering
\begin{subfigure}{1.0\textwidth}
\centering
\includegraphics[width=\textwidth]{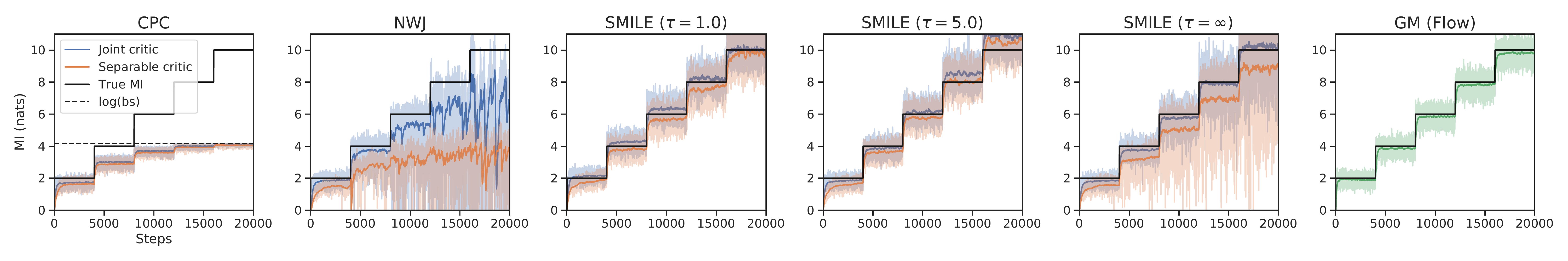}
\end{subfigure}
~
\begin{subfigure}{1.0\textwidth}
\centering
\includegraphics[width=\textwidth]{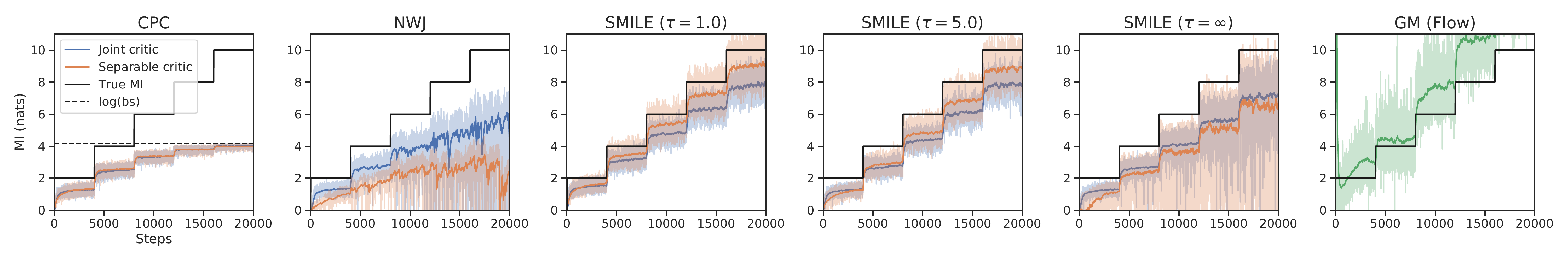}
\end{subfigure}
\caption{Performance of mutual information estimation approaches on \textbf{Gaussian} (top row) and \textbf{Cubic} (bottom row). Left two columns are $\cpc$ and $\nwj$, next three columns are $I_{\mathrm{SMILE}}$ with $\tau = 1.0, 5.0, \infty$ and the right column is $I_{\mathrm{GM}}$ with flow models.}
\label{fig:gaussian_mi}
\end{figure}

Figure~\ref{fig:gaussian_mi} shows the estimated mutual information over the number of iterations. In both tasks, $\cpc$ has high bias and $\nwj$ has high variance when the ground truth MI is high, whereas $I_{\mathrm{SMILE}}$ has relatively low bias and low variance across different architectures and tasks. Decreasing $\tau$ in the SMILE estimator decreases variances consistently but has different effects over bias; for example, under the joint critic bias is higher for $\tau = 5.0$ in \textbf{Gaussian} but lower in \textbf{Cubic}.
$I_{\mathrm{GM}}$ with flow models has the best performance on \textbf{Gaussian}, yet performs poorly on \textbf{Cubic}, illustrating the importance of model parametrization in the \textit{generative approaches}.

\begin{figure}[htbp]
  \centering
\begin{subfigure}{\textwidth}
\centering
\includegraphics[width=\textwidth]{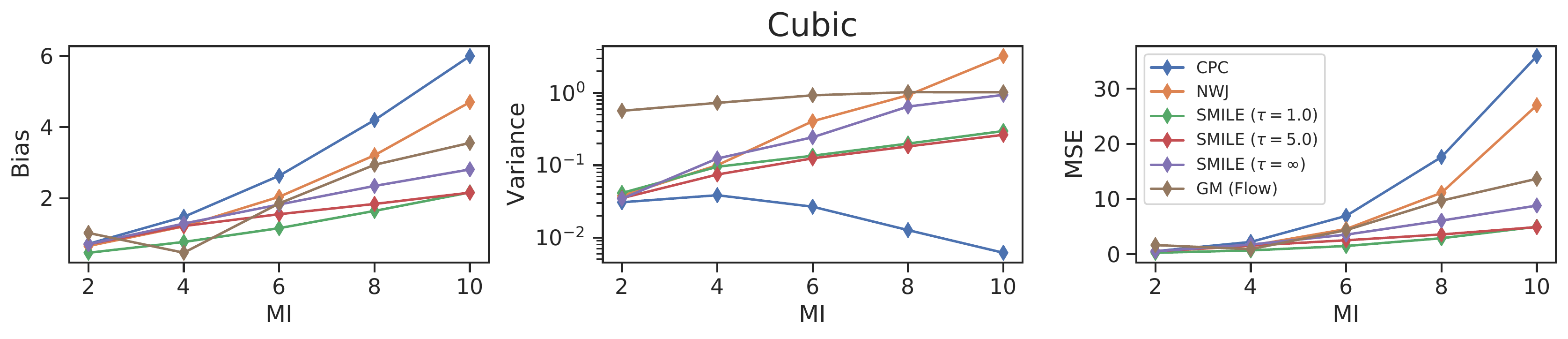}
\end{subfigure}
\caption{
   Bias / Variance / MSE of various estimators on \textbf{Cubic} (right). We display more results for \textbf{Gaussian} in Appendix~\ref{app:experimental-setup}.
} \label{fig:bias_variance_mse}
\end{figure}

In Figure~\ref{fig:bias_variance_mse}, we compare the bias, variance and mean squared error (MSE) of the \textit{discriminative methods}. We observe that the variance of $\nwj$ increases exponentially with mutual information, which is consistent with our theory in Corollary~\ref{thm:nwj_var}. On the other hand, the SMILE estimator is able to achieve much lower variances with small $\tau$ values; in comparison the variance of SMILE when $\tau = \infty$ is similar to that of $\nwj$ in \textbf{Cubic}. In Table~\ref{tab:bias_var_mse}, we show that $\ismile$ can have nearly two orders of magnitude smaller variance than $\nwj$ while having similar bias. Therefore $\ismile$ enjoys lower MSE in this benchmark MI estimation task compared to $\nwj$ and $\cpc$.

\subsection{Self-consistency tests on Images}
\label{sec:self-consistency-exp}

\begin{figure}[htbp]
    \centering
    \includegraphics[width=0.9\textwidth]{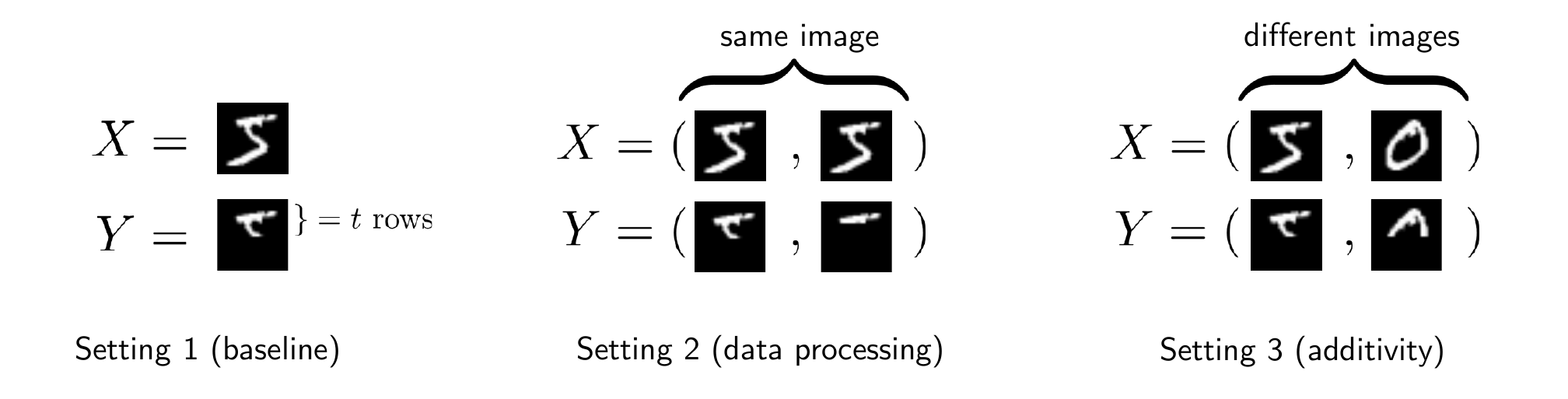}
    \caption{Three settings in the self-consistency experiments.}
    \label{fig:mi_setup}
\end{figure}

Next, we perform our proposed \textit{self-consistency} tests on high-dimensional images (MNIST and CIFAR10) under three settings, where the ground truth MI is difficult to obtain (if not impossible). These settings are illustrated in Figure~\ref{fig:mi_setup}.
\begin{enumerate}
    \item The first setting is where $X$ is an image and $Y$ is the same image where we mask the bottom rows, leaving the top $t$ rows from $X$ ($t$ is selected before evaluation). The rationale behind this choice of $Y$ is twofold: 1) $I(X; Y)$ should be non-decreasing with $t$; 2) it is easier (compared to low-d representations) to gain intuition about the amount of information remaining in $Y$.
    \item In the second setting, $X$ corresponds to two identical images, and $Y$ to the top $t_1$, $t_2$ rows of the two images ($t_1 \geq t_2$); this considers the ``data-processing'' property.
    \item In the third setting, $X$ corresponds to two independent images, and $Y$ to the top $t$ rows of both; this considers the ``additivity'' property.
\end{enumerate}
 We compare four approaches: $\cpc$, $\mine$, $\ismile$ and $I_{\mathrm{GM}}$. We use the same CNN architecture for $\cpc$, $\mine$ and $\ismile$, and use VAEs~\citep{kingma2013auto} for $I_{\mathrm{GM}}$. We include more experimental details and alternative image processing approaches in Appendix~\ref{app:experimental-setup}.

\begin{figure}[htbp]
    \centering
\begin{subfigure}{0.45\textwidth}
\centering
\includegraphics[width=\textwidth]{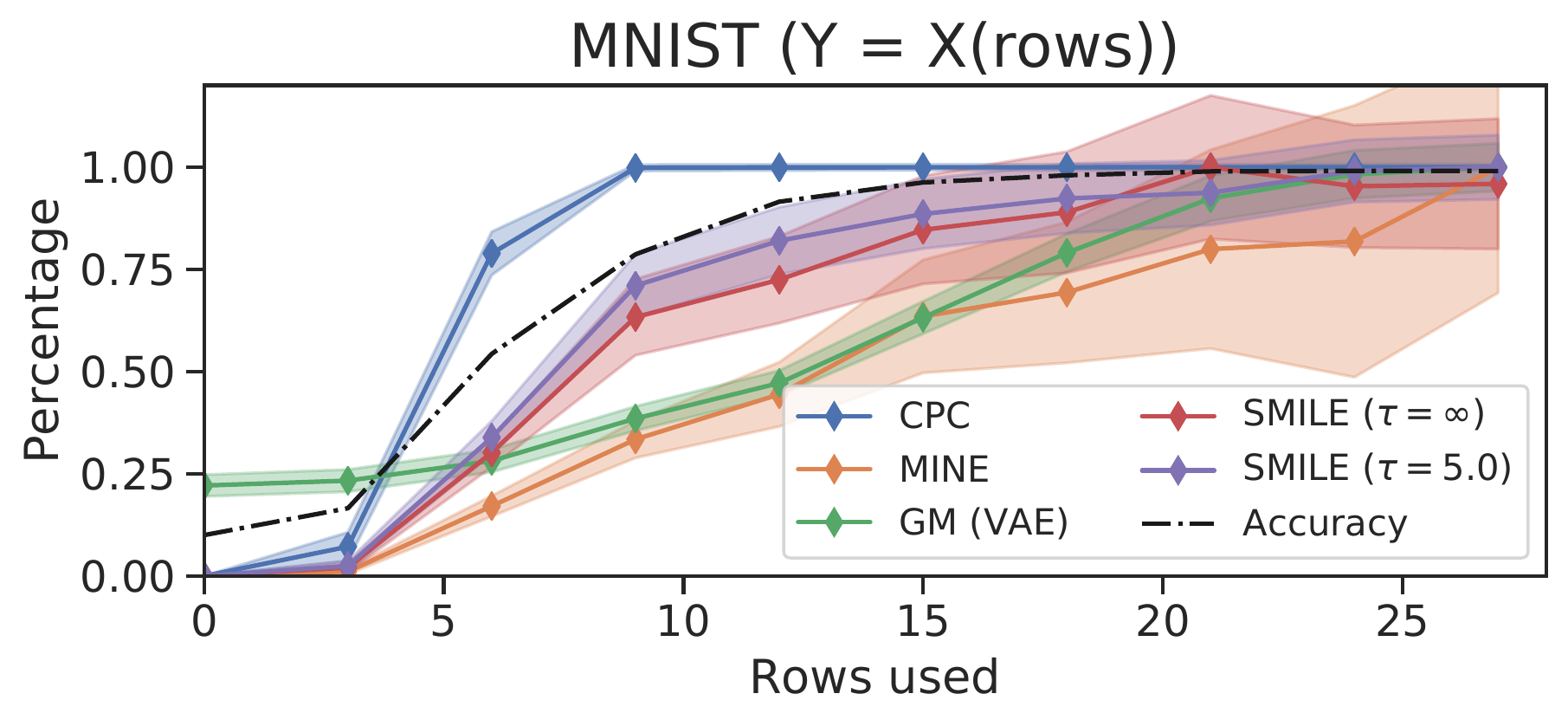}
\end{subfigure}
~
\begin{subfigure}{0.45\textwidth}
\centering
\includegraphics[width=\textwidth]{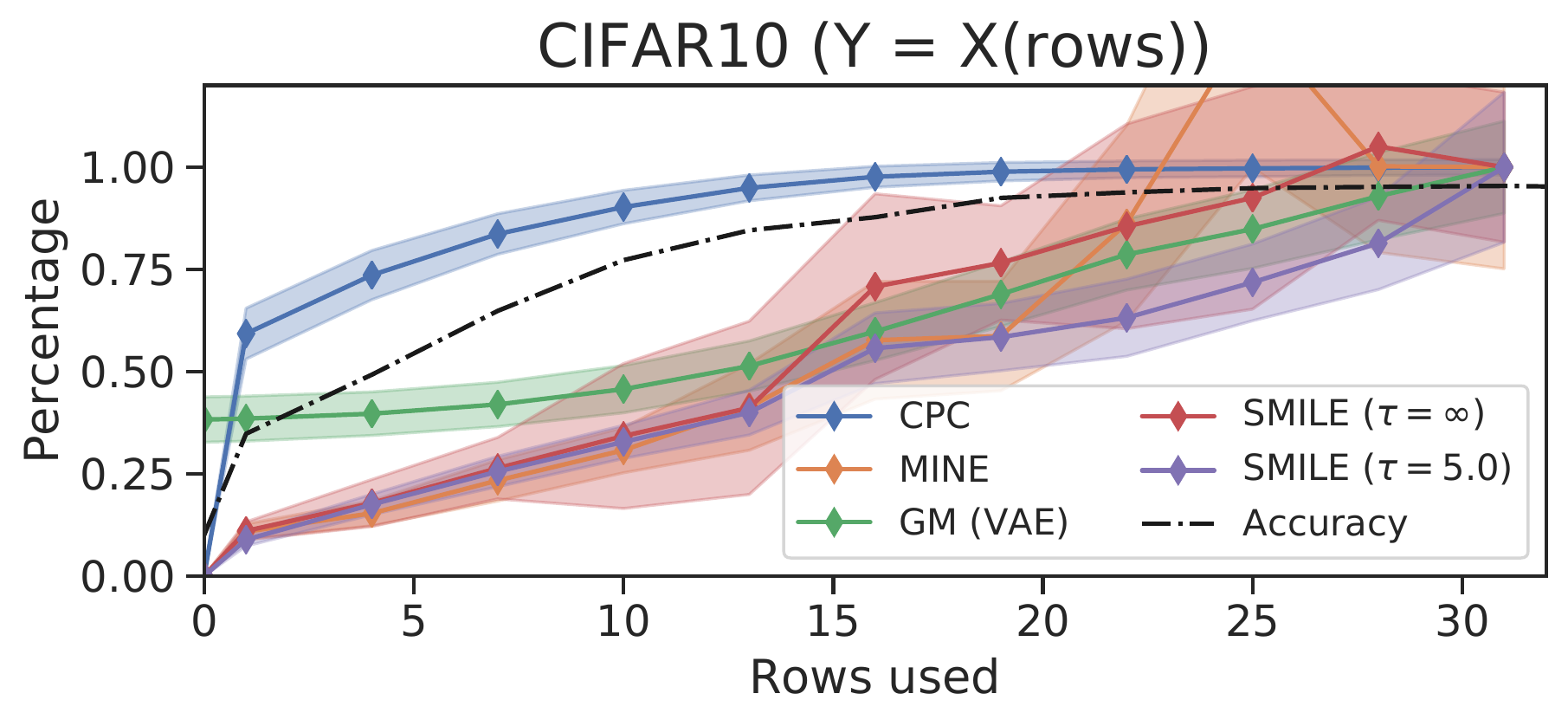}
\end{subfigure}
\caption{Evaluation of $\hat{I}(X; Y) / \hat{I}(X;, X)$. $X$ is an image and $Y$ contains the top $t$ rows of $X$. %
}
\label{fig:exp1}
\end{figure}

\paragraph{Baselines} We evaluate the first setting with $Y$ having varying number of rows $t$ in Figure~\ref{fig:exp1}, where the estimations are normalized by the estimated $\hat{I}(X; X)$. Most methods (except for $I_{\mathrm{GM}}$) predicts zero MI when $X$ and $Y$ are independent, passing the first self-consistency test. Moreover, the estimated MI is non-decreasing with increasing $t$, but with different slopes. As a reference, we show the validation accuracy of predicting the label where only the top $t$ rows are considered. %

\begin{figure}[htbp]
    \centering
\begin{subfigure}{0.45\textwidth}
\centering
\includegraphics[width=\textwidth]{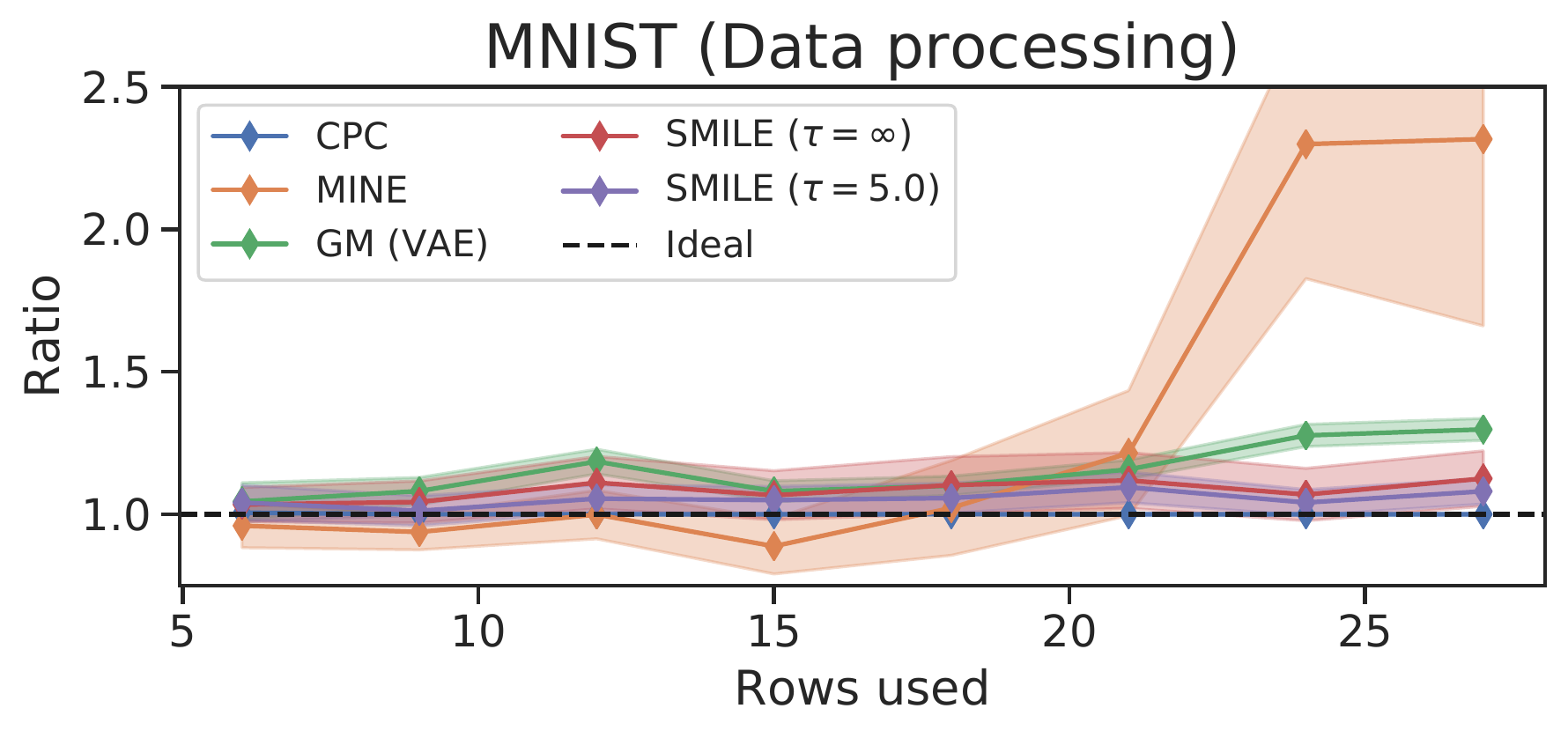}
\end{subfigure}
~
\begin{subfigure}{0.45\textwidth}
\centering
\includegraphics[width=\textwidth]{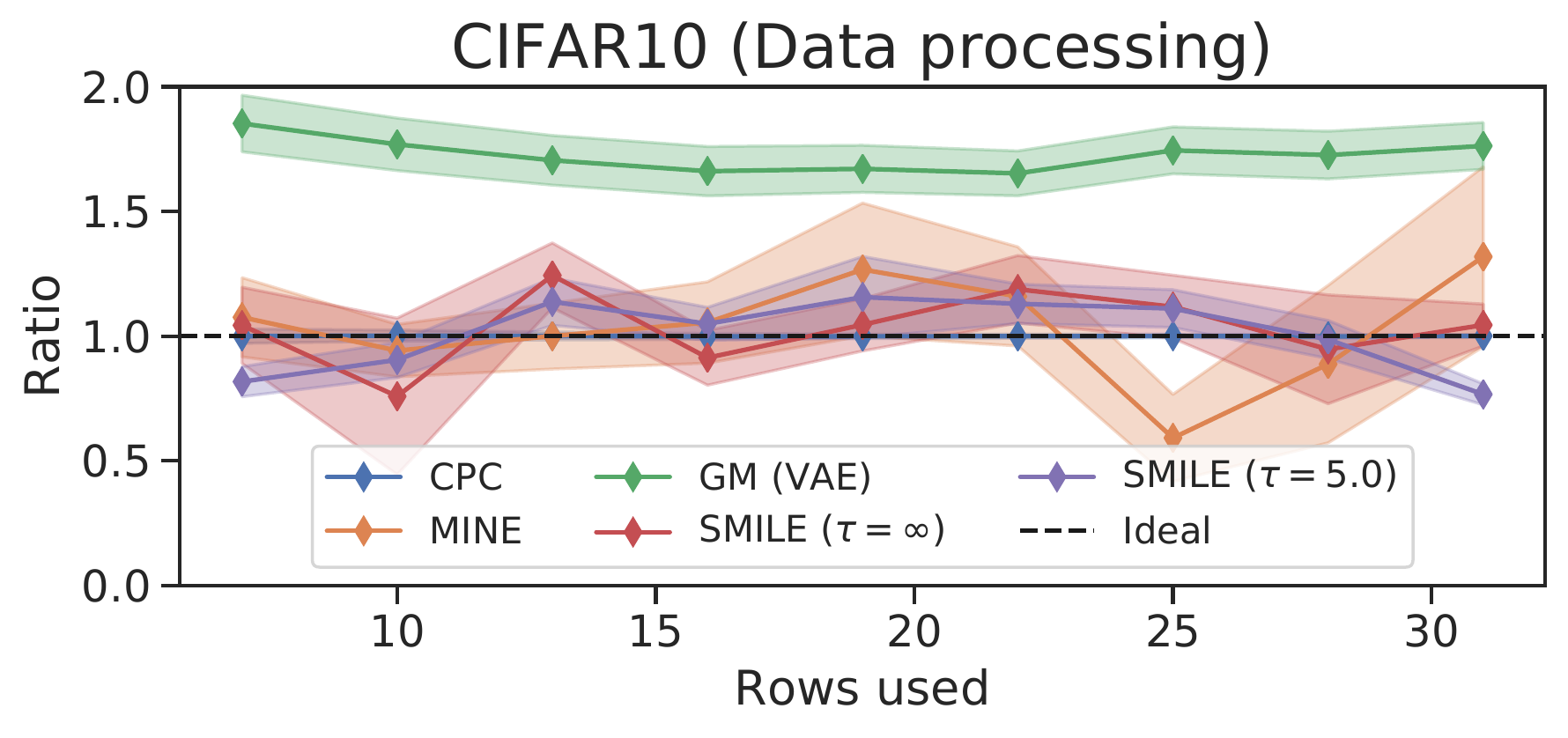}
\end{subfigure}
\caption{Evaluation of $\hat{I}([X, X]; [Y, h(Y)]) / \hat{I}(X, Y)$, where the ideal value is 1. 
}
\label{fig:exp2}
\end{figure}

\paragraph{Data-processing} In the second setting we set $t_2 = t_1 - 3$. Ideally, the estimator should satisfy $\hat{I}([X, X]; [Y, h(Y)]) / \hat{I}(X, Y) \approx 1$, as additional processing should not increase information. We show the above ratio in Figure~\ref{fig:exp2} under varying $t_1$ values. All methods except for $\mine$ and $I_{\mathrm{GM}}$ performs well in both datasets; $I_{\mathrm{GM}}$ performs poorly in CIFAR10 (possibly due to limited capacity of VAE), whereas $\mine$ performs poorly in MNIST (possibly due to numerical stability issues).

\begin{figure}[H]
    \centering
\begin{subfigure}{0.45\textwidth}
\centering
\includegraphics[width=\textwidth]{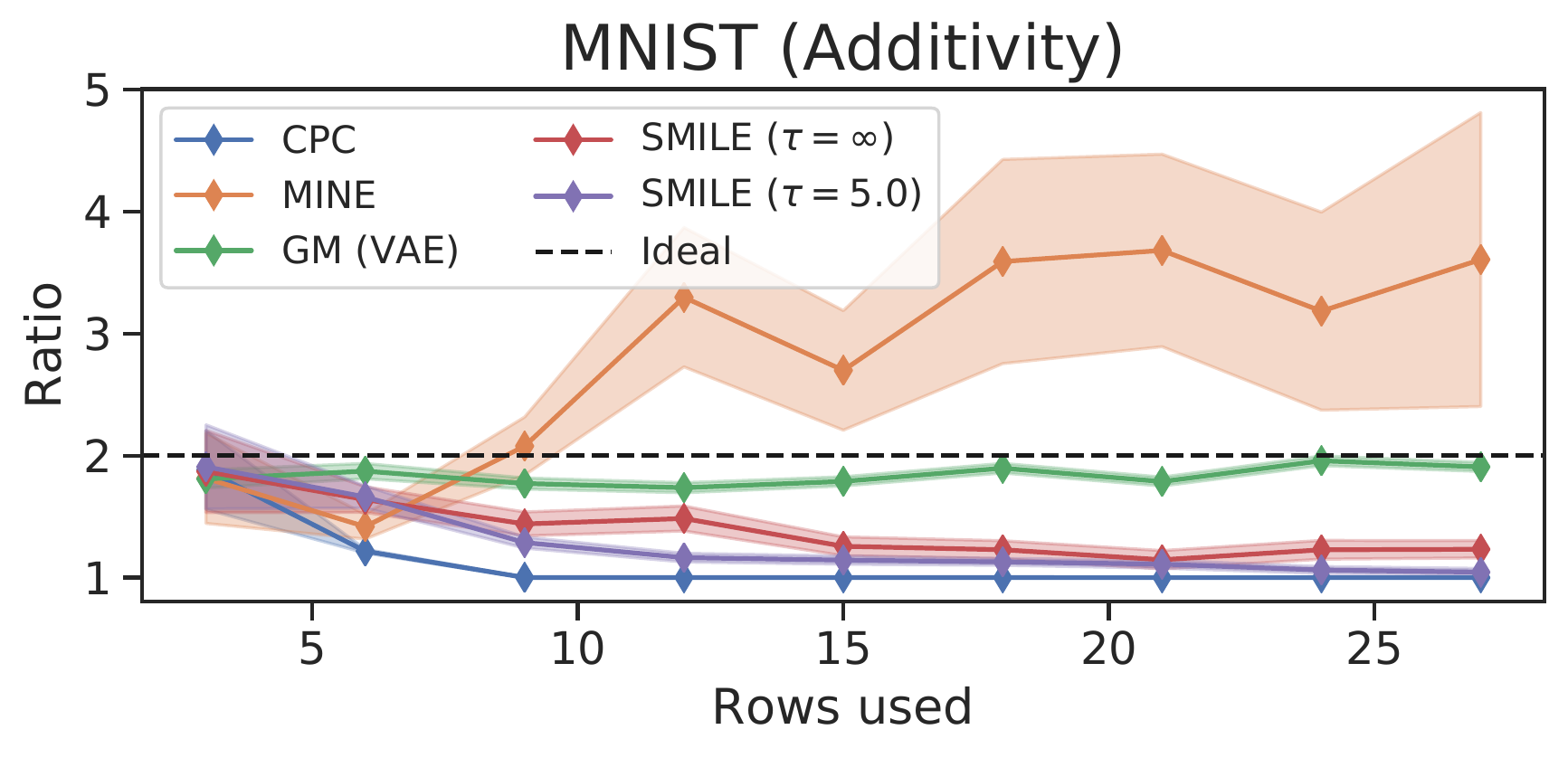}
\end{subfigure}
~
\begin{subfigure}{0.45\textwidth}
\centering
\includegraphics[width=\textwidth]{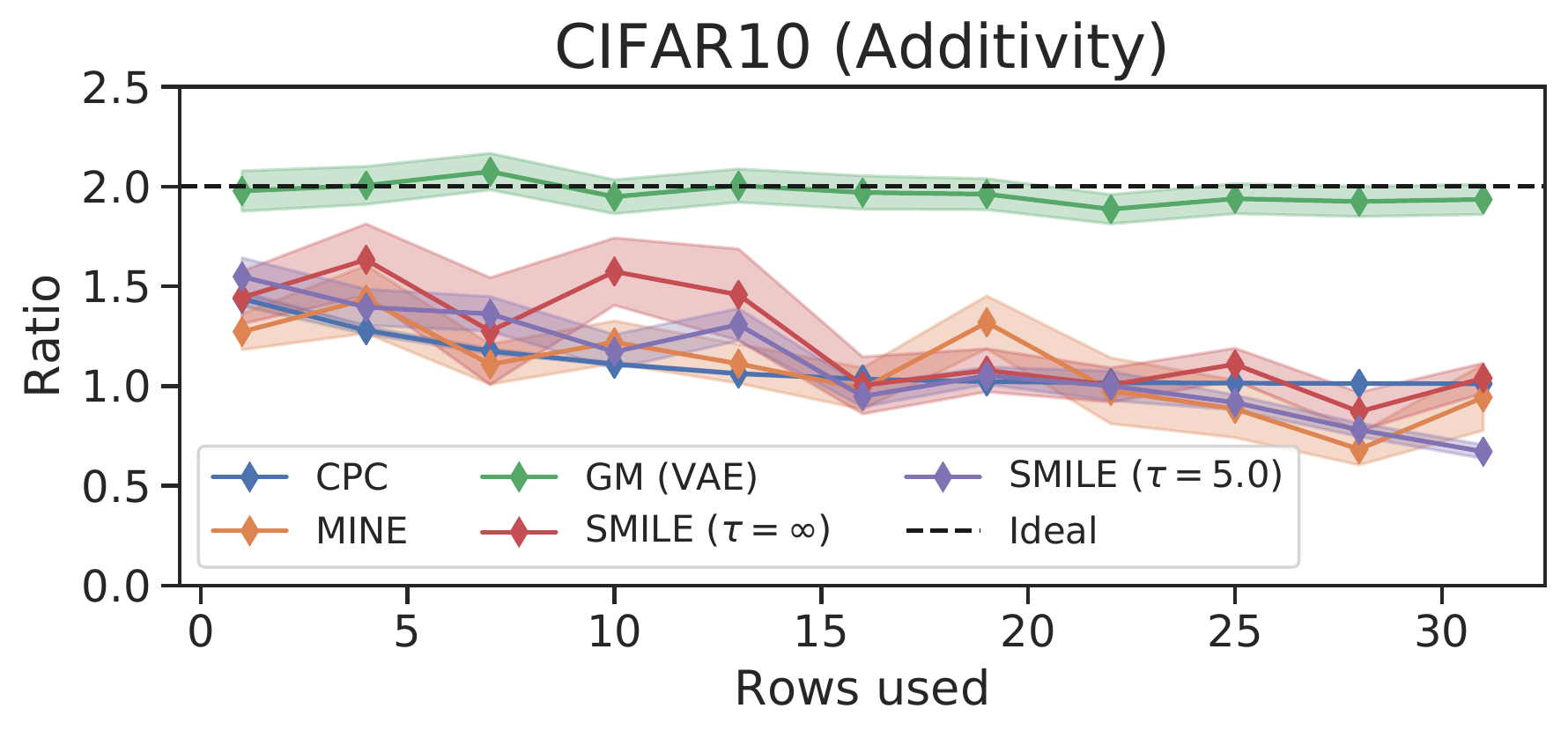}
\end{subfigure}
\caption{Evaluation of $\hat{I}([X_1, X_2]; [Y_1, Y_2]) / \hat{I}(X, Y)$, where the ideal value is 2. %
}
\label{fig:exp3}
\end{figure}

\paragraph{Additivity} In the third setting, the estimator should double its value compared to the baseline with the same $t$, i.e. $\hat{I}([X_1, X_2]; [Y_1, Y_2]) / \hat{I}(X, Y) \approx 2$. Figure~\ref{fig:exp3} shows the above ratio under different values of $t$. None of the \textit{discriminative} approaches worked well in this case except when $t$ is very small, when $t$ is large this ratio converges to 1 (possibly due to initialization and saturation of the training objective). $I_{\mathrm{GM}}$ however, performs near perfectly on this test for all values of $t$.
\section{Discussion}

In this work, we discuss \textit{generative} and \textit{discriminative} approaches to variational mutual information estimation and demonstrate their limitations. We show that estimators based on $\nwj$ and $\mine$ are prone to high variances when estimating with mini-batches, inspiring our $\ismile$ estimator that improves performances on benchmark tasks. However, none of the approaches are good enough to pass the \textit{self-consistency} tests. The \textit{generative} approaches perform poorly when MI is small (failing independence and data-processing tests) while the \textit{discriminative} approaches perform poorly when MI is large (failing additivity tests).

These empirical evidences suggest that optimization over these variational estimators are not necessarily related to optimizing MI, so the empirical successes with these estimators might have little connections to optimizing mutual information. Therefore, it would be helpful to acknowledge these limitations and consider alternative measurements of information that are more suited for modern machine learning applications~\citep{ozair2019wasserstein,tschannen2019on}.

\subsection*{Acknowledgements}
This research was supported by AFOSR (FA9550-19-1-0024), NSF (\#1651565, \#1522054, \#1733686), ONR, and FLI. The authors would like to thank Shengjia Zhao, Yilun Xu and Lantao Yu for helpful discussions.

\bibliography{bib}
\bibliographystyle{iclr2020_conference}
\newpage
\appendix
\section{Proofs}
\label{app:proofs}
\subsection{Proofs in Section~\ref{sec:density-ratio}}
\kldelta*
\begin{proof}
For every $T \in \normmax(Q)$, define $r_T = \frac{e^{T}}{\bb{E}_{Q}[e^T]}$, then $r_T \in \Delta(Q)$ and from the Donsker-Varadhan inequality~\citep{donsker1975asymptotic}
\begin{align}
    \KL(P \Vert Q) &= \sup_{T \in L^\infty(Q)} \bb{E}_P[T] -  \log \bb{E}_{Q}[e^T] \\
    &= \sup_{T \in L^\infty(Q)} \bb{E}_P\left[\log \frac{e^T}{\bb{E}_{Q}[e^T]}\right] = \sup_{r_T \in \Delta(Q)} \bb{E}_P[\log r_T]
\end{align}
Moreover, we have:
\begin{align}
    \KL(P \Vert Q) = \bb{E}_{P}[\log \diff P - \log \diff Q] = \bb{E}_{P}\left[\log \frac{\diff P}{\diff Q}\right]
\end{align}
which completes the proof.
\end{proof}

\begin{corollary}
\label{thm:cpc_lower_bound}
$\forall P, Q \in \gP(\gX)$ such that $P \ll Q$, $\forall f_\theta: \gX \to \R_{\geq 0}$ we have
\begin{align}
    I(X; Y) \geq I_{\mathrm{CPC}}(f_\theta) := \E_{P^n(X, Y)}\Bigg[\frac{1}{n} \sum_{i=1}^n \log{\frac{f_{\theta}(\vx_i,\vy_i)}{\frac1n \sum_{j=1}^n f_{\theta}(\vx_i,\vy_j)}}\Bigg]
\end{align}

\end{corollary}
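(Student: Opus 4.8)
The plan is to realize $I_{\mathrm{CPC}}(f_\theta)$ as $\E_{\tilde P}[\log r]$ for a valid density ratio $r$ on an augmented sample space, and then invoke Theorem~\ref{thm:kl_delta}. First I would exploit the symmetry of the objective: since the $n$ pairs $(\vx_i,\vy_i)$ are i.i.d.\ under $P^n(X,Y)$, every summand $\log\bigl(f_\theta(\vx_i,\vy_i)/\tfrac1n\sum_j f_\theta(\vx_i,\vy_j)\bigr)$ has the same expectation, so $I_{\mathrm{CPC}}(f_\theta)$ equals the expectation of the single $i=1$ term.

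Next I would identify the law of the tuple $(\vx_1,\vy_1,\vy_2,\dots,\vy_n)$ under $P^n$. Because the pairs are independent, $(\vx_1,\vy_1)\sim P(X,Y)$, while each $\vy_j$ with $j\ge 2$ is the $Y$-component of an independent pair and is therefore distributed as $P(Y)$ and independent of $(\vx_1,\vy_1)$. Call this measure $\tilde P$ on $\gX\times\gY^n$, and let $\tilde Q$ be the measure under which $\vx\sim P(X)$ and $\vy_1,\dots,\vy_n$ are drawn i.i.d.\ from $P(Y)$, independently of $\vx$. A direct density computation gives $\diff\tilde P/\diff\tilde Q = p(\vy_1\mid\vx)/p(\vy_1) = p(\vx,\vy_1)/\bigl(p(\vx)\,p(\vy_1)\bigr)$, which depends only on $(\vx,\vy_1)$; integrating out $\vy_2,\dots,\vy_n$ then yields $\KL(\tilde P\Vert\tilde Q) = I(X;Y)$. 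The absolute continuity $\tilde P\ll\tilde Q$ needed to apply the theorem follows from the standing assumption $P\ll Q$.

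Finally I would set $r(\vx,\vy_1,\dots,\vy_n) = f_\theta(\vx,\vy_1)\big/\bigl(\tfrac1n\sum_{j=1}^n f_\theta(\vx,\vy_j)\bigr)$ and verify $r\in\Delta(\tilde Q)$. Nonnegativity is immediate from $f_\theta\ge 0$. For the normalization, the crucial step is exchangeability: under $\tilde Q$ the variables $\vy_1,\dots,\vy_n$ are i.i.d., so $\E_{\tilde Q}\bigl[f_\theta(\vx,\vy_k)/\tfrac1n\sum_j f_\theta(\vx,\vy_j)\bigr]$ does not depend on $k$; summing the $n$ identical terms gives $\E_{\tilde Q}[n]=n$, hence each term, and in particular $\E_{\tilde Q}[r]$, equals $1$. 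Applying Theorem~\ref{thm:kl_delta} to the pair $(\tilde P,\tilde Q)$ then gives $I(X;Y)=\KL(\tilde P\Vert\tilde Q)=\sup_{r'\in\Delta(\tilde Q)}\E_{\tilde P}[\log r']\ge \E_{\tilde P}[\log r]=I_{\mathrm{CPC}}(f_\theta)$, as desired.

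The main obstacle is conceptual rather than computational: one must recognize the correct augmented construction, namely that the denominator $\tfrac1n\sum_j f_\theta(\vx_1,\vy_j)$ mixes one "positive" sample $\vy_1$ (coupled to $\vx_1$) with $n-1$ "negative" samples $\vy_2,\dots,\vy_n$ (independent of $\vx_1$), and that this exactly matches a $\KL$ on $\gX\times\gY^n$ whose value is still $I(X;Y)$. Once $\tilde P$ and $\tilde Q$ are set up, the only genuine verification is that $\E_{\tilde Q}[r]=1$, which the exchangeability argument handles cleanly; the remaining positivity caveat (ensuring the denominator is almost surely nonzero) is routine.
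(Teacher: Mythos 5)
Your proof is correct and takes essentially the same route as the paper's: both realize the CPC summand as $\E[\log r]$ for the ratio $r = n f_\theta(\vx,\vy_i)/\sum_{j=1}^n f_\theta(\vx,\vy_j)$ on the augmented space, verify the normalization $\E_{P(X)P^n(Y)}[r] = 1$, and apply Theorem~\ref{thm:kl_delta} together with the identity $\KL(\tilde P \Vert \tilde Q) = I(X_i; Y_1^n) = I(X;Y)$. The only cosmetic difference is that you invoke exchangeability to reduce everything to the $i=1$ term, whereas the paper keeps the sum over $i$ and bounds each of the $n$ summands separately.
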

\begin{proof}
\begin{align}
    n I_{\mathrm{CPC}}(f_\theta) & := \E_{P^n(X, Y)}\Bigg[ \sum_{i=1}^n \log{\frac{f_{\theta}(\vx_i,\vy_i)}{\frac1n \sum_{j=1}^n f_{\theta}(\vx_i,\vy_j)}}\Bigg] \\
    & = \E_{P^n(X, Y)}\Bigg[ \sum_{i=1}^n  \log{ \frac{n f_{\theta}(\vx_i,\vy_i)}{\sum_{j=1}^n f_{\theta}(\vx_i,\vy_j)}}\Bigg]
\end{align}
Since 
\begin{align}
    \E_{P(X) P^n(Y)}\Bigg[   { \frac{ n  f_{\theta}(\vx,\vy)}{ \sum_{j=1}^n f_{\theta}(\vx,\vy_j)}}\Bigg] = 1,
\end{align}
we can apply Theorem~\ref{thm:kl_delta} to obtain:
\begin{align}
    n I_{\mathrm{CPC}}(f_\theta) &=  \E_{P^n(X, Y)}\Bigg[ \sum_{i=1}^n  \log{ \frac{n f_{\theta}(\vx_i,\vy_i)}{\sum_{j=1}^n f_{\theta}(\vx_i,\vy_j)}}\Bigg] \\
    &=  \sum_{i=1}^n \E_{P(X_i, Y_1^n)}\Bigg[ \log{ \frac{n f_{\theta}(\vx_i,\vy_i)}{\sum_{j=1}^n f_{\theta}(\vx_i,\vy_j)}}\Bigg] \\
    &\leq  \sum_{i=1}^n I(X_i; Y_1^n) = n I(X; Y)
\end{align}\
where $Y_1^n$ denotes the concatenation of $n$ independent random variables $(Y_1, \ldots, Y_n)$ and $$P(X_i, Y_1^n) = P(X_i, Y_i)P(Y_1^{i-1})P(Y_{i+1}^n)$$
is the joint distribution of $P(X_i, Y_1^n)$.
\end{proof}

\subsection{Proofs in Section~\ref{sec:variance-reduction}}

\miestimatorvariance*
\begin{proof}
Consider the variance of $r^\star(\vx)$ when $\vx \sim Q$:
\begin{align}
    \Var_Q[r^\star] & = \bb{E}_Q\left[\left(\frac{\mathrm{d}P}{\mathrm{d}Q}\right)^2\right] - \left(\bb{E}_Q\left[\frac{\mathrm{d}P}{\mathrm{d}Q}\right]\right)^2 \label{eq:vardef} \\
    & = \bb{E}_{P}\left[\frac{\mathrm{d}P}{\mathrm{d}Q}\right] - 1  \label{eq:rdt} \\
    & \geq e^{\bb{E}_P[\log \frac{\mathrm{d}P}{\mathrm{d}Q}]} - 1 \label{eq:log-x}\\
    & = e^{\KL(P \Vert Q)} - 1 \label{eq:kldef}
\end{align}
where (\ref{eq:vardef}) uses the definition of variance, (\ref{eq:rdt}) uses the definition of Radon-Nikodym derivative to change measures, (\ref{eq:log-x}) uses Jensen's inequality over $\log$, and (\ref{eq:kldef}) uses the definition of KL divergences.

The variance of the mean of $n$ i.i.d. random variables then gives us:
\begin{align}
    \Var_Q[\bb{E}_{Q_n}[r]] = \frac{\Var[r]}{n} \geq \frac{e^{\KL(P \Vert Q)} - 1}{n}
\end{align}
which is the first part of the theorem.

As $n \to \infty$, $\Var_Q[\bb{E}_{Q_n}[r]] \to 0$, so we can apply the delta method:
\begin{align}
    \Var_Q[f(X)] \approx (f'(\bb{E}(X)))^2 \Var_Q[X]
\end{align}

Applying $f = \log$ and $\bb{E}[X] = 1$ gives us the second part of the theorem:
\begin{align}
    \lim_{n \to \infty} n \Var_Q[\log \bb{E}_{Q_n}[r]] = \lim_{n \to \infty} n \Var[\bb{E}_{Q_n}[r]] \geq e^{\KL(P \Vert Q)} - 1
\end{align}
which describes the variance in the asymptotic sense.
\end{proof}

\nwjvar*

\begin{proof}
Since $P_m$ and $Q_n$ are independent, we have
\begin{align}
    \Var[\nwj^{m,n}] & \geq \Var[\bb{E}_{Q_n}[r^\star]] \\
    & = \Var[\bb{E}_{Q_n}[r^\star]] \geq \frac{e^{\KL(P \Vert Q)} - 1}{n}
\end{align}
and
\begin{align}
    \lim_{n \to \infty} n \Var[\mine^{m,n}] \geq \lim_{n \to \infty} n \Var[\log \bb{E}_{Q_n}[r^\star]] \geq e^{\KL(P \Vert Q)} - 1
\end{align}
which completes the proof.
\end{proof}

\subsection{Proofs in Section~\ref{sec:smile}}

\smilebiasverbose*

\begin{proof}
We establish the upper bounds by finding a worst case $r$ to find the largest $\vert \bb{E}_{Q}[r] - \bb{E}_{Q}[r_\tau] \vert$. First, without loss of generality, we may assume that $r(\vx) \in (-\infty, e^{-\tau}] \cup [e^{\tau}, \infty)$ for all $\vx \in \gX$. Otherwise, denote $\gX_\tau(r) = \{\vx \in \gX: e^{-\tau} < r(\vx) < e^{\tau} \}$ as the (measurable) set where the $r(\vx)$ values are between $e^{-\tau}$ and $e^{\tau}$. Let 
\begin{align}
    V_\tau(r) = \int_{\vx \in \gX_\tau(r)} r(\vx) \mathrm{d} \vx \in (e^{-\tau} |\gX_\tau(r)|, e^{\tau} |\gX_\tau(r)|)
\end{align}
be the integral of $r$ over $\gX_\tau(r)$. We can transform $r(\vx)$ for all $\vx \in \gX_\tau(r)$ to have values only in $\{e^{-\tau}, e^{\tau}\}$ and still integrate to $V_\tau(r)$, so the expectation under $Q$ is not changed.

Then we show that we can rescale all the values above $e^\tau$ and below $e^\tau$ to the same value without changing the expected value under $Q$.
We denote
\begin{align}
    K_1 &= \log \int I(r(\vx) \leq e^{-\tau}) r(\vx) \mathrm{d} Q(\vx) - 
    \log \int I(r(\vx) \leq e^{-\tau}) \mathrm{d} Q(\vx) \\
    K_2 &= \log \int I(r(\vx) \geq e^{\tau}) r(\vx) \mathrm{d} Q(\vx) - 
    \log \int I(r(\vx) \geq e^{\tau}) \mathrm{d} Q(\vx)
\end{align}
where $e^{K_1}$ and $e^{K_2}$ represents the mean of $r(\vx)$ for all $r(\vx) \leq e^{-\tau}$ and $r(\vx) \geq e^{\tau}$ respectively. We then have:
\begin{gather}
    \bb{E}_{Q}[r] = e^{K_1} \int I(r(\vx) \leq e^{-\tau}) \mathrm{d} Q(\vx) + e^{K_2} \int I(r(\vx) \geq e^{\tau}) \mathrm{d} Q(\vx) \\
    1 = \int I(r(\vx) \leq e^{-\tau}) \mathrm{d} Q(\vx) + \int I(r(\vx) \geq e^{\tau}) \mathrm{d} Q(\vx)
\end{gather}
so we can parametrize $\bb{E}_{Q}[r]$ via $K_1$ and $K_2$. Since $E_Q[r] = S$ by assumption, we have:
\begin{align}
    \int I(r(\vx) \leq e^{-\tau})  \mathrm{d} Q(\vx) = \frac{e^{K_2} - S}{e^{K_2} - e^{-K_1}}
\end{align}
and from the definition of $r_\tau(\vx)$:
\begin{align}
   \bb{E}_{Q}[r_\tau] =  \frac{e^{K_2}e^{-\tau} - S e^{-\tau} + S e^{\tau} - e^{-K_1} e^{\tau}}{e^{K_2} - e^{-K_1}} := g(K_1, K_2)
\end{align}
We can obtain an upper bound once we find $\max g(K_1, K_2)$ and $\min g(K_1, K_2)$. First, we have:
\begin{align}
    \frac{\partial  g(K_1, K_2)}{\partial K_1} &= \frac{e^{-K_1}e^\tau (e^{K_2} - e^{-K_1}) - e^{-K_1} (e^{K_2}e^{-\tau} - S e^{-\tau} + S e^{\tau} - e^{-K_1} e^{\tau})}{(e^{K_2} - e^{-K_1})^2} \nonumber \\
    &= \frac{e^{-K_1}(e^\tau - e^{-\tau})(e^{K_2} - S)}{(e^{K_2} - e^{-K_1})^2} \geq 0
\end{align}
\begin{align}
    \frac{\partial  g(K_1, K_2)}{\partial K_2} &= \frac{e^{K_2}e^{-\tau} (e^{K_2} - e^{-K_1}) - e^{K_2} (e^{K_2}e^{-\tau} - S e^{-\tau} + S e^{\tau} - e^{-K_1} e^{\tau})}{(e^{K_2} - e^{-K_1})^2} \nonumber \\
    & = \frac{e^{K_2}(e^\tau - e^{-\tau})(e^{-K_1} - S)}{(e^{K_2} - e^{-K_1})^2} \leq 0
\end{align}
Therefore, $g(K_1, K_2)$ is largest when $K_1 \to \infty, K_2 = \tau$ and smallest when $K_1 = \tau, K_2 \to \infty$. 
\begin{gather}
    \max g(K_1, K_2) = \lim_{K \to \infty} \frac{1  - e^{-K}e^{\tau} + S(e^{\tau} - e^{-\tau})}{e^{\tau} - e^{-K}} = S + e^{-\tau} - Se^{-2\tau} \\
     \min g(K_1, K_2) = \lim_{K \to \infty} \frac{e^{K}e^{-\tau} - 1 + S(e^{\tau} - e^{-\tau})}{e^{K} - e^{-\tau}} = e^{-\tau}
\end{gather}
Therefore,
\begin{align}
    \vert \bb{E}_{Q}[r] - \bb{E}_{Q}[r_\tau] & \vert \leq \max (|\max g(K_1, K_2) - S|, |S - \min g(K_1, K_2)|) \\
    & = \max \left(|e^{-\tau} - S e^{-2\tau}|, |S - e^{-\tau}|\right)
\end{align}

The proof for Theorem~\ref{thm:smilebias-verbose} simply follows the above analysis for fixed $K$. When $\tau < K$, we consider the case when $K_1 \to \infty, K_2 = \tau$ and $K_1 = \tau, K_2 = K$; when $\tau > K$ only the smaller values will be clipped, so the increased value is no larger than the case where $K_1 \to \infty, K_2 = K$:
\begin{align}
    \frac{e^{K} - S}{e^{K}} \cdot e^\tau = e^{-\tau} (1 - S e^{-K})
\end{align}
where $e^K \geq S$ from the fact that $\int r \diff Q = S$.
\end{proof}

\smilevar*
\begin{proof}
Since $r_\tau(\vx)$ is bounded between $e^{\tau}$ and $e^{-\tau}$, we have
\begin{align}
    \Var[r_\tau] \leq \frac{e^{\tau} - e^{-\tau}}{4}
\end{align}
Taking the mean of $n$ independent random variables gives us the result.
\end{proof}

Combining Theorem~\ref{thm:smilebias-verbose} and \ref{thm:smilevar} with the bias-variance trade-off argument, we have the following:
\begin{corollary}
\label{thm:mse}
Let $r(\vx): \gX \to \R_{\geq 0}$ be any non-negative measurable function such that $\int r \mathrm{d}Q = S$, $S \in (0, \infty)$ and $r(\vx) \in [0, e^{K}]$. Define $r_\tau(\vx) = \mathrm{clip}(r(\vx), e^\tau, e^{-\tau})$ for finite, non-negative $\tau$ and $\bb{E}_{Q_n}$ as the sample average of $n$ i.i.d. samples from $Q$. 
If $\tau < K$, then
\begin{align}
    \bb{E}_{Q}[(r - \bb{E}_{Q_n}[r_\tau])^2] \leq \max \left(e^{-\tau}|1 - S e^{-\tau}|, \left|\frac{1 - e^{K}e^{-\tau} + S(e^{K} - e^{\tau})}{e^{K} - e^{-\tau}}\right|\right)^2 + \frac{e^\tau - e^{-\tau}}{4n}; \nonumber
\end{align}
If $\tau \geq K$, then:
\begin{align}
   \bb{E}_{Q}[(r - \bb{E}_{Q_n}[r_\tau])^2] \leq e^{-2\tau}(1 - S e^{-K})^2 + \frac{e^\tau - e^{-\tau}}{4n}
\end{align}
\end{corollary}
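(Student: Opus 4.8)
The plan is to recognize Corollary~\ref{thm:mse} as the standard bias--variance decomposition of the mean squared error, with the two required ingredients already supplied by Theorems~\ref{thm:smilebias-verbose} and~\ref{thm:smilevar}. The quantity $\bb{E}_{Q}[(r - \bb{E}_{Q_n}[r_\tau])^2]$ should be read as the MSE, taken over the randomness of the $n$ i.i.d.\ samples defining $Q_n$, of the estimator $\bb{E}_{Q_n}[r_\tau]$ for the true partition function $S = \bb{E}_Q[r]$. The key observation is that $\bb{E}_{Q_n}[r_\tau]$ is an \emph{unbiased} estimator of the \emph{clipped} partition function $\bb{E}_Q[r_\tau]$, not of $S$ itself; the gap between $\bb{E}_Q[r_\tau]$ and $S$ is precisely the deterministic clipping bias controlled in Theorem~\ref{thm:smilebias-verbose}.

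First I would insert $\pm\,\bb{E}_Q[r_\tau]$ and expand the square. Writing $\hat S_\tau = \bb{E}_{Q_n}[r_\tau]$, the cross term $2\,\bb{E}[(\hat S_\tau - \bb{E}[\hat S_\tau])(\bb{E}[\hat S_\tau] - S)]$ vanishes because $\bb{E}[\hat S_\tau] = \bb{E}_Q[r_\tau]$ and $\bb{E}[\hat S_\tau - \bb{E}[\hat S_\tau]] = 0$. This yields the exact decomposition
\begin{align}
\bb{E}\big[(\hat S_\tau - S)^2\big] = \underbrace{\big(\bb{E}_Q[r_\tau] - \bb{E}_Q[r]\big)^2}_{\text{squared bias}} + \underbrace{\Var[\bb{E}_{Q_n}[r_\tau]]}_{\text{variance}}. \nonumber
\end{align}

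Next I would substitute the two bounds. For the squared-bias term I would square the statement of Theorem~\ref{thm:smilebias-verbose}, treating the two regimes separately: squaring the $\max(\cdot,\cdot)$ bound gives the first summand in the $\tau<K$ case, and squaring $e^{-\tau}(1 - S e^{-K})$ gives the $e^{-2\tau}(1 - S e^{-K})^2$ term in the $\tau \geq K$ case. For the variance term I would directly invoke Theorem~\ref{thm:smilevar} to get $\Var[\bb{E}_{Q_n}[r_\tau]] \leq (e^\tau - e^{-\tau})/(4n)$. Adding the two summands reproduces both displayed inequalities.

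I do not expect a genuine obstacle, since both hard estimates are already established; the only points requiring care are bookkeeping ones. Specifically, I must ensure that the estimand is the \emph{true} (unclipped) partition function $S$, so that the clipping bias from Theorem~\ref{thm:smilebias-verbose} enters as the bias term, and I must confirm that the cross term in the decomposition vanishes (it does, by unbiasedness of the sample mean for $\bb{E}_Q[r_\tau]$). The remaining step of squaring the bias bound is purely algebraic and monotone, since both sides of the inequality in Theorem~\ref{thm:smilebias-verbose} are non-negative.
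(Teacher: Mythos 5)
Your proposal is correct and is exactly the paper's argument: the paper gives no separate proof, stating only that the corollary follows by ``combining Theorem~\ref{thm:smilebias-verbose} and \ref{thm:smilevar} with the bias-variance trade-off argument,'' which is precisely your decomposition of $\bb{E}[(\bb{E}_{Q_n}[r_\tau] - S)^2]$ into squared clipping bias (Theorem~\ref{thm:smilebias-verbose}, squared) plus sampling variance (Theorem~\ref{thm:smilevar}), with the cross term vanishing by unbiasedness of the sample mean for $\bb{E}_Q[r_\tau]$. Your reading of the left-hand side as the MSE of the estimator for the true partition function $S$ (rather than a literal expectation over $r(\vx) \sim Q$, under which the claim would fail) is the interpretation the paper intends, and your explicit treatment of the two regimes $\tau < K$ and $\tau \geq K$ matches the statement.
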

\newpage
\section{Additional Experimental Details}
\label{app:experimental-setup}

\subsection{Benchmark Tasks}

\textbf{Tasks} We sample each dimension of $(\vx, \vy)$ independently from a correlated Gaussian with mean $0$ and correlation of $\rho$, where $\gX = \gY = \R^{20}$. The true mutual information is computed as:
\begin{align}
    I(\vx, \vy) = - \frac{d}{2} \log \left(1 - \frac{\rho}{2}\right)
\end{align}
The initial mutual information is $2$, and we increase the mutual information by $2$ every $4k$ iterations, so the total training iterations is $20k$.

\textbf{Architecture and training procedure} For all the \textit{discriminative} methods, we consider two types of architectures -- \textit{joint} and \textit{separable}. The \textit{joint} architecture concatenates the inputs $\vx, \vy$, and then passes through a two layer MLP with 256 neurons in each layer with ReLU activations at each layer. The \textit{separaable architecture} learns two separate neural networks for $\vx$ and $\vy$ (denoted as $g(\vx)$ and $h(\vy)$) and predicts $g(\vx)^\top h(\vy)$; $g$ and $h$ are two neural networks, each is a two layer MLP with 256 neurons in each layer with ReLU activations at each layer; the output of $g$ and $h$ are 32 dimensions. 

For the generative method, we consider the invertible flow architecture described in~\citep{dinh2014nice,dinh2016density}. $p_\theta, p_\phi, p_\psi$ are flow models with 5 coupling layers (with scaling), where each layer contains a neural network with 2 layers of 100 neurons and ReLU activation. For all the cases, we use with the Adam optimizer~\citep{kingma2014adam} with learning rate $5 \times 10^{-4}$ and $\beta_1 = 0.9, \beta_2 = 0.999$ and train for $20k$ iterations with a batch size of $64$, following the setup in~\citet{poole2019on}.

\textbf{Additional results} We show the bias, variance and mean squared error of the \textit{discriminative} approaches in Table~\ref{tab:bias_var_mse}. We include additional results for $\ismile$ with $\tau = 10.0$.

\begin{table}[h]
\small
    \centering
    \begin{tabular}{c|c|ccccc|ccccc}
    \toprule
    & & \multicolumn{5}{|c|}{Gaussian} & \multicolumn{5}{|c}{Cubic} \\\midrule
    & MI & 2 & 4 & 6 & 8 & 10 & 2 & 4 & 6 & 8 & 10 \\\midrule
    
\multirow{5}{*}{Bias}    & CPC & 0.25 & 0.99 & 2.31 & 4.00 & 5.89  & 0.72 & 1.48 & 2.63 & 4.20 & 5.99 \\
& NWJ & 0.12 & 0.30 & 0.75 & 2.30 & 2.97  & 0.66 & 1.21 & 2.04 & 3.21 & 4.70 \\
& SMILE ($\tau = 1.0$) & 0.15 & 0.30 & 0.32 & 0.18 & 0.03  & 0.47 & 0.77 & 1.16 & 1.64 & 2.16 \\
& SMILE ($\tau = 5.0$) & 0.13 & 0.11 & 0.19 & 0.54 & 0.86  & 0.71 & 1.22 & 1.55 & 1.84 & 2.16 \\
& SMILE ($\tau = 10.0$) & 0.14 & 0.21 & 0.22 & 0.11 & 0.19  & 0.70 & 1.28 & 1.83 & 2.44 & 3.02 \\
& SMILE ($\tau = \infty$) & 0.15 & 0.21 & 0.22 & 0.12 & 0.22  & 0.71 & 1.29 & 1.82 & 2.35 & 2.81 \\
& GM (Flow) & 0.11 & 0.14 & 0.15 & 0.14 & 0.17 & 1.02 & 0.47 & 1.85 & 2.93 & 3.55 \\
\midrule
    
\multirow{5}{*}{Var} & CPC & 0.04 & 0.04 & 0.02 & 0.01 & 0.00 & 0.03 & 0.04 & 0.03 & 0.01 & 0.01 \\
& NWJ & 0.06 & 0.22 & 1.36 & 16.50 & 99.0 & 0.04 & 0.10 & 0.41 & 0.93 & 3.23 \\
& SMILE ($\tau = 1.0$) & 0.05 & 0.12 & 0.20 & 0.28 & 0.34 & 0.04 & 0.10 & 0.14 & 0.20 & 0.30 \\
& SMILE ($\tau = 5.0$) & 0.05 & 0.11 & 0.19 & 0.31 & 0.51 & 0.04 & 0.07 & 0.12 & 0.18 & 0.26 \\
& SMILE ($\tau = 10.0$) & 0.05 & 0.13 & 0.31 & 0.69 & 1.35 & 0.03 & 0.10 & 0.21 & 0.46 & 0.79 \\
& SMILE ($\tau = \infty$) & 0.05 & 0.14 & 0.36 & 0.75 & 1.54  & 0.03 & 0.12 & 0.24 & 0.65 & 0.94 \\
& GM (Flow) & 0.05 & 0.10 & 0.13 & 0.16 & 0.19 & 0.56 & 0.72 & 0.92 & 1.02 & 1.02 \\
\midrule

\multirow{5}{*}{MSE} & CPC & 0.10 & 1.02 & 5.33 & 16.00 & 34.66  & 0.55 & 2.22 & 6.95 & 17.62 & 35.91 \\
& NWJ & 0.07 & 0.32 & 2.19 & 33.37 & 28.43  & 0.47 & 1.55 & 4.56 & 11.13 & 27.00 \\
& SMILE ($\tau = 1.0$) & 0.08 & 0.21 & 0.30 & 0.32 & 0.31  & 0.26 & 0.69 & 1.49 & 2.90 & 4.98 \\
& SMILE ($\tau = 5.0$) & 0.07 & 0.13 & 0.22 & 0.57 & 1.26  & 0.54 & 1.56 & 2.53 & 3.58 & 4.92 \\
& SMILE ($\tau = 10.0$) & 0.07 & 0.18 & 0.36 & 0.67 & 1.33  & 0.52 & 1.75 & 3.54 & 6.41 & 9.91 \\
& SMILE ($\tau = \infty$) & 0.08 & 0.19 & 0.40 & 0.76 & 1.62  & 0.54 & 1.75 & 3.55 & 6.09 & 8.81 \\
& GM (Flow) & 0.07 & 0.11 & 0.14 & 0.17 & 0.22 & 1.65 & 0.91 & 4.36 & 9.70 & 13.67 \\
\bottomrule
    \end{tabular}
    \caption{Bias, Variance and MSE of the estimators under the joint critic.}
    \label{tab:bias_var_mse}
\end{table}

We show the bias, variance and MSE results in Figure~\ref{fig:bias_var_mse}. We also evaluate the variance of estimating $\bb{E}_{Q_n}[r_\tau]$ (partition function with clipped ratios) for different values of $\tau$ in the SMILE estimator in Figure~\ref{fig:var_second_term}. With smaller $\tau$ we see a visible decrease in terms of variance in this term; this is consistent with the variance estimates in Figure~\ref{fig:bias_var_mse}, as there the variance of $\bb{E}_{P_n}[\log r]$ is also considered.

\begin{figure}[htbp]
  \centering
  \begin{subfigure}{0.8\textwidth}
\centering
\includegraphics[width=\textwidth]{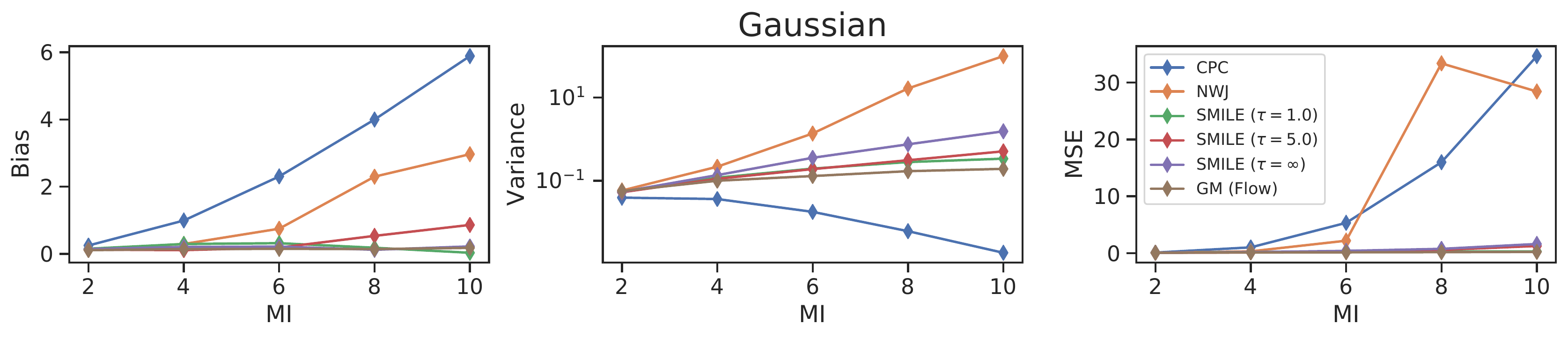}
\end{subfigure}
~
\begin{subfigure}{0.8\textwidth}
\centering
\includegraphics[width=\textwidth]{figures/bias_variance_mse_cubic.pdf}
\end{subfigure}
\caption{
   Bias / Variance / MSE of various estimators. on \textbf{Gaussian} (top) and \textbf{Cubic} (down).
}
\label{fig:bias_var_mse}
\end{figure}

\begin{figure}[h]
    \centering
      \begin{subfigure}{0.5\textwidth}
\centering
\includegraphics[width=\textwidth]{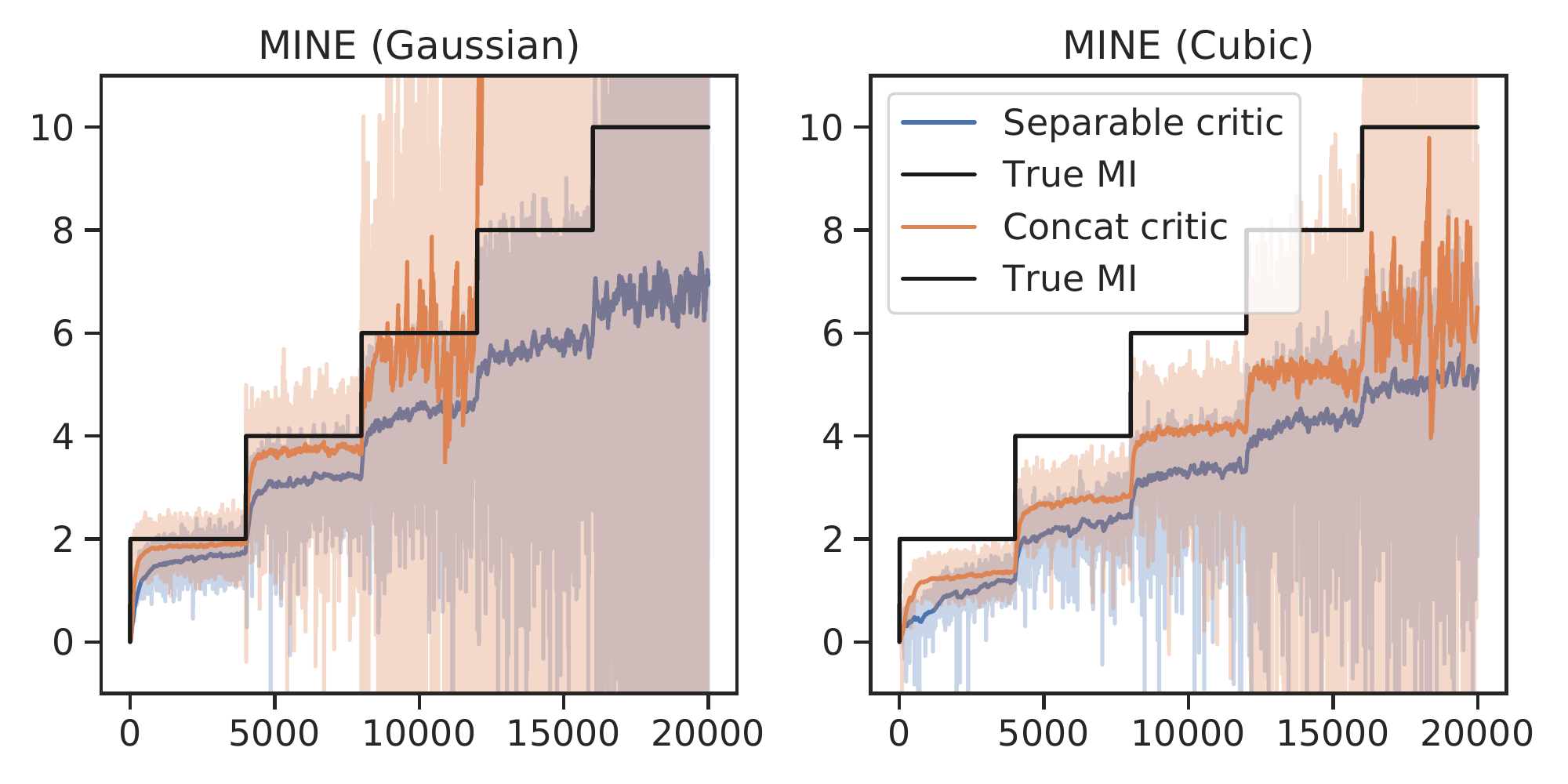}
\caption{Additional benchmark results on MINE estimator.}
\end{subfigure}
~
\begin{subfigure}{0.35\textwidth}
\centering
\includegraphics[width=\textwidth]{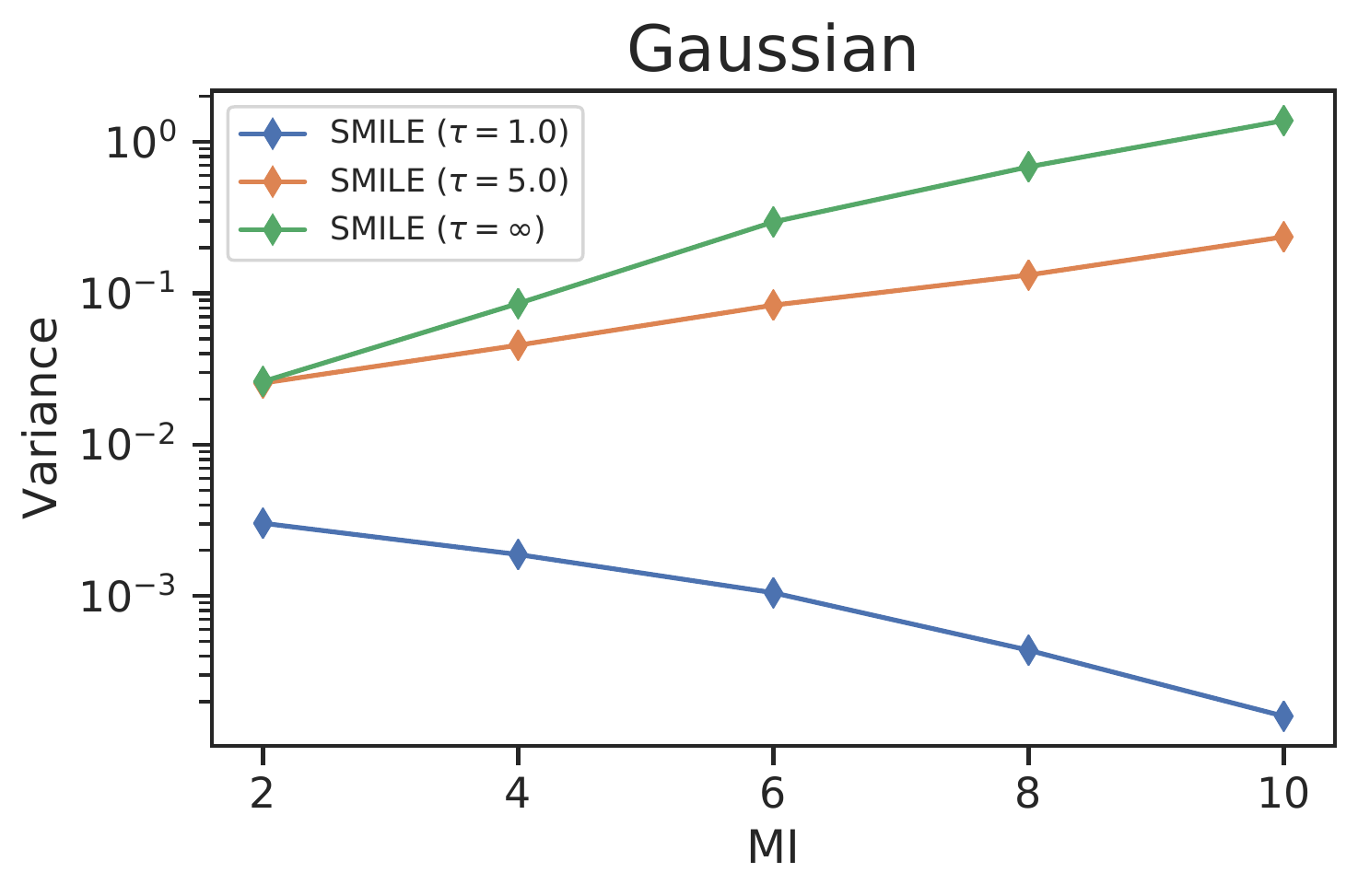}
    \caption{Variance of $\bb{E}_{Q_n}[r_\tau]$}
    \label{fig:var_second_term}
\end{subfigure}
\caption{Additional benchmark results.}
    
\end{figure}

\newpage
\subsection{Self-consistency Experiments}
\paragraph{Tasks} We consider three tasks with the mutual information estimator $\hat{I}$:
\begin{enumerate}
    \item $\hat{I}(X; Y)$ where $X$ is an image from MNIST~\citep{lecun1998gradient} or CIFAR10~\citep{krizhevsky2012imagenet} and $Y$ is the top $t$ rows of $X$. To simplify architecture designs, we simply mask out the bottom rows to be zero, see Figure~\ref{fig:mi_setup}.
    \item $\hat{I}([X, X]; [Y; h(Y)])$ where $X$ is an image, $Y$ is the top $t$ rows of $X$, $h(Y)$ is the top $(t-3)$ rows of $Y$ and $[\cdot, \cdot]$ denotes concatenation. Ideally, the prediction should be close to $\hat{I}(X; Y)$.
    \item $\hat{I}([X_1, X_2], [Y_1, Y_2])$ where $X_1$ and $X_2$ are independent images from MNIST or CIFAR10, $Y_1$ and $Y_2$ are the top $t$ rows of $X_1$ and $X_2$ respectively. Ideally, this prediction should be close to $2 \cdot \hat{I}(X; Y)$.
\end{enumerate}

\paragraph{Architecture and training procedure} We consider the same architecture for all the \textit{discriminative} approaches. The first layer is a convolutional layer with $64$ output channels, kernel size of $5$, stride of $2$ and padding of $2$; the second layer is a convolutional layer with $128$ output channels, kernel size of $5$, stride of $2$ and padding of $2$. This is followed another fully connected layer with $1024$ neurons and finally a linear layer that produces an output of $1$. All the layers (except the last one) use ReLU activations. We stack variables over the channel dimension to perform concatenation.

For the generative approach, we consider the following VAE architectures. The encoder architecture is identical to the \textit{discriminative approach} except the last layer has 20 outputs that predict the mean and standard deviations of 10 Gaussians respectively. The decoder for MNIST is a two layer MLP with 400 neurons each; the decoder for CIFAR10 is the corresponding transposed convolution network for the encoder. All the layers (except the last layers for encoder and decoder) use ReLU activations. For concatenation we stack variables over the channel dimension. For all the cases, we use with the Adam optimizer~\citep{kingma2014adam} with learning rate $10^{-4}$ and $\beta_1 = 0.9, \beta_2 = 0.999$. For $I_{\mathrm{GM}}$ we train for 10 epochs, and for the \textit{discriminative} methods, we train for 2 epochs, due to numerical stability issues of $\mine$. 

\paragraph{Additional experiments on scaling, rotation and translation}
We consider additional benchmark experiments on MNIST where instead of removing rows, we apply alternative transformations such as random scaling, rotation and translations. For random scaling, we upscale the image randomly by 1x to 1.2x; for random rotation, we randomly rotate the image between $\pm 20$ degrees; for random translation, we shift the image randomly by no more than 3 pixels horizontally and vertically. We consider evaluating the data processing and additivity properties, where the ideal value for the former is no more than 1, and the ideal value for the latter is 2. From the results in Table~\ref{tab:add-exp}, none of the considered approaches achieve good results in all cases.

\begin{table}[h]
    \centering
    \small
    \begin{tabular}{c|c|ccccc}
    \toprule
      &   & CPC & MINE & GM (VAE) & SMILE $(\tau = 5.0)$ & SMILE $(\tau = \infty)$ \\\midrule
      \multirow{3}{*}{Data-Processing}   & Scaling & 1.00 & 1.03 & 1.12 & 1.19 & 1.04\\
      & Rotation & 1.00 & 1.30 & 1.13 & 1.03 & 1.27 \\
      & Translation & 1.00 & 1.28 & 1.01 & 1.07 & 1.08 \\\midrule
       \multirow{3}{*}{Additivity}   & Scaling & 1.00 & 1.55 & 1.89 & 1.04 & 1.18\\
      & Rotation & 1.00 & 2.09 & 1.58 & 1.50 & 1.78 \\
      & Translation & 1.00 & 1.41 & 1.28 & 1.32 & 1.33\\\bottomrule
    \end{tabular}
    \caption{Self-consistency experiments on other image transforms.}
    \label{tab:add-exp}
\end{table}

\end{document}